\documentclass[twoside,11pt]{article}

%

\newtheorem{theorem}{Theorem}

\newtheorem{lemma}{Lemma}

\newtheorem{assum}{Assumption}
\newtheorem{remark}{Remark}
\usepackage{jmlr2e}
\usepackage{epsfig}
\usepackage{graphicx}
\usepackage{algorithm}
\usepackage{algorithmic,amssymb}
\usepackage{amsmath}
\allowdisplaybreaks
\usepackage{natbib}
\usepackage{color}
\usepackage{times}
\usepackage[hyphens]{url}
\usepackage{soul}
\usepackage{url}
\usepackage{caption}
\usepackage{multirow}
\usepackage{makecell}
\usepackage[T1]{fontenc}
\usepackage[utf8]{inputenc}
\usepackage{placeins}
\usepackage{amsfonts,amstext,mathrsfs}
\usepackage{subcaption}

\usepackage{booktabs}       
\usepackage{nicefrac}       
\usepackage{microtype}      
\usepackage{multirow}
\usepackage{makecell}
\usepackage{bbding,threeparttable} 

\providecommand{\customgenericname}{}

\newcommand{\ie}{\emph{i.e.}}

\newcommand{\BE}{\mathbb{E}}

\newcommand{\BW}{{\bf w}}

\newcommand{\BWW}{{\bf \widehat{w}}}

\newcommand{\whv}{{\widehat{v}}}
\newcommand{\bv}{{\bar{v}}}

\newcommand{\mt}{{m_t}}
\newcommand{\bo}{\mathcal{O}}





\usepackage{geometry}
\geometry{left=2cm,right=2cm, bottom=2.5cm, top=2cm}
\begin{document}

\title{Desirable Companion for Vertical Federated Learning: New Zeroth-Order Gradient Based Algorithm}
\author{\name Qingsong Zhang \email qszhang1995@gmail.com      \\
\addr Xidian University \& JD Tech.
\\
\name Bin Gu \email {jsgubin@gmail.com} \\
\addr MBZUAI \& JD Finance America Corporation
\\
\name Zhiyuan Dang \email {zhiyuandang@gmail.com} \\
\addr Xidian University \& JD Tech
\\
\name Cheng Deng	\email {chdeng.xd@gmail.com} \\
 \addr School of Electronic Engineering, Xidian University
 \\
 \name Heng Huang \email {heng.huang@pitt.edu} \\
\addr JD Finance America Corporation \& University of Pittsburgh\\
}
\editor{}

\maketitle

\setcounter{equation}{11}

	\begin{abstract}
		Vertical federated learning (VFL) attracts increasing attention due to the emerging demands of multi-party collaborative modeling and concerns of privacy leakage. A complete list of metrics to evaluate  VFL algorithms should include  model applicability, privacy security, communication cost, and computation efficiency, where privacy security is especially important to VFL.
		However, to the best of our knowledge, there does not exist a VFL algorithm  satisfying all these  criteria very well. To address this challenging problem, in this paper, we reveal that zeroth-order optimization (ZOO) is a desirable companion  for VFL.   Specifically, ZOO can 1) improve the model applicability of VFL framework, 2) prevent VFL framework from privacy leakage under curious, colluding, and malicious threat models,  3) support inexpensive communication and efficient computation. Based on that, we propose a novel and  practical VFL framework with black-box models, which is inseparably interconnected to the promising properties of  ZOO. We believe that it takes one stride towards designing a practical VFL framework matching all the criteria.
		Under this framework, we raise two novel {\bf asy}nchronous ze{\bf r}oth-ord{\bf e}r algorithms for {\bf v}ertical f{\bf e}derated {\bf l}earning (AsyREVEL) with different smoothing techniques. We theoretically drive the convergence rates of AsyREVEL algorithms under nonconvex condition. More importantly, we  prove the privacy security of our proposed framework under existing VFL attacks on different levels.  Extensive experiments on benchmark datasets demonstrate the favorable model applicability, satisfied privacy security, inexpensive communication, efficient computation, scalability and losslessness of our framework.
	\end{abstract}

\section{Introduction}\label{sec-introduction}
Federated learning \cite{mcmahan2016communication,smith2017federated,kairouz2019advances,gascon2016secure} is a prevailing distributed machine learning  paradigm for collaboratively training a machine learning model with privacy-preserving. A line of recent works \cite{mcmahan2016communication,smith2017federated} focus on the horizontal federated learning, where different parties have  different samples IDs but they all share the same complete features. The other line of  works \cite{hardy2017private,yang2019federated,zhang2021asysqn,zhang2021secure} studying the vertical federated learning (VFL), where data owned by different parties have the same sample IDs but disjoint subsets
of features.
Such scenario is common in the industry applications of emerging cross-organizational collaborative learning, including but not limited to medical study, financial risk, and  targeted marketing \cite{gong2016private,yang2019federated,cheng2019secureboost,hu2019fdml}.
For example, E-commerce companies owning the online shopping information could collaboratively train joint-models with banks and digital finance companies that own other information of the same people such as the average monthly deposit and online consumption, respectively, to achieve a precise customer profiling.
In this paper, we focus on VFL due to its wide applications for emerging multi-organization collaborative modeling with privacy-preserving.

There have been extensive works studying VFL systems from several perspectives. For example, some works focus on developing fast and scalable optimization methods for training VFL models based on stochastic gradient descent (SGD) methods \cite{wan2007privacy,hu2019fdml,liu2019communication,gu2020Privacy} and stochastic quasi-Newton methods \cite{yang2019quasi}. Some works study attack models under different threat models,
such as inference attack under the honest-but-curious \cite{gu2020Privacy}, inference attacks under the honest-but-colluding \cite{cheng2019secureboost,weng2020privacy} and backdoor attack under the malicious \cite{liu2020backdoor}. And some works study different (auxiliary) defense strategies such as the scalar product protocol for defense \cite{du2004privacy,hu2019fdml,liu2019communication,gu2020Privacy} and the auxiliary strategies, including the differential privacy (DP) \cite{liu2019boosting,xu2019hybridalpha,chen2020vafl} and the gradient sparsification \cite{liu2020backdoor}, for alleviating different attacks. Besides, there are several works focusing on reducing the communication cost (number of communication rounds) \cite{liu2019communication,yang2019quasi}, and also some works concerning different computation manners such as the synchronous \cite{gong2016private,zhang2018feature,liu2019communication} and  asynchronous ones \cite{hu2019fdml,gu2020Privacy}.

In fact, the above  perspectives can be summarized into a complete list of criteria, \ie, model applicability, privacy security, communication cost and computation efficiency, which can be used to comprehensively evaluate the performance of a VFL algorithm. Specifically,
1) {\bf model applicability} means the ability to solve different problems,	\cite{wan2007privacy,hu2019fdml,liu2019communication,gu2020Privacy,yang2019quasi},
2) {\bf privacy security} depends on the ability to defense different attacks, which is especially important to VFL \cite{gu2020Privacy,cheng2019secureboost,weng2020privacy,liu2020backdoor},
3) {\bf communication cost} depends one the number of communication rounds and the per-round communication overhead (PRCO) \cite{liu2019communication,yang2019quasi}, and
4) {\bf computation efficiency} is mainly  dominated by the computation manner, \ie, the asynchronous or synchronous \cite{hu2019fdml,gu2020Privacy,chen2020vafl,liu2019communication}.

However, to the best of our knowledge, there does not exist a VFL algorithm that is well designed to satisfy all these metrics together. Specifically,

\begin{enumerate}
	\item
	Most existing VFL frameworks adopt SGD methods \cite{wan2007privacy,hu2019fdml,liu2019communication,gu2020Privacy}. However, these optimization methods will fail when applied to the widely-existing problems whose explicit expressions of gradients are difficult or infeasible to obtain, such as the structure prediction \cite{sokolov2018sparse}, bandit learning \cite{shamir2017optimal} and black-box learning \cite{liu2018zeroth} problems.
	Thus, these VFL frameworks have the poor model applicability when applied to these problems.  	
	\item 	Privacy security is especially important for VFL, thus there have many attack manners \cite{gu2020Privacy,cheng2019secureboost,weng2020privacy,liu2020backdoor} and defense (or auxiliary defense) strategies \cite{liu2019boosting,xu2019hybridalpha,chen2020vafl,liu2020backdoor} been proposed. However, they still can not totally defense some existing VFL attacks, especially,  the latest proposed inference attacks in \cite{luo2020feature,weng2020privacy,liu2020backdoor} and backdoor attack in \cite{liu2020backdoor} due to transmitting the informative knowledge, \emph{e.g.}, the model parameters and (intermediate) gradients. Thus, existing VFL frameworks have the unsatisfied privacy  security.
	
	\item	Meanwhile, in the real VFL system, different parties always represent different companies or organizations across different networks.  In this case, most existing VFL frameworks directly transmitting the model parameters \cite{gong2016private,yang2019quasi,liu2019boosting,xu2019hybridalpha} or gradients \cite{weng2020privacy,chen2020vafl,liu2020backdoor} between parties are much communication-expensive due to the large PRCO.
	
	\item 	Moreover, it is common in the real-world applications that both large and small companies collaboratively learn the model, where the former have better computational capacity while the later have the poorer. In this case,  algorithms using synchronous computation \cite{gong2016private,zhang2018feature} are inefficient. Because, parties possessing better computational capacity have to waste the computational capacity to wait the stragglers for synchronization.
\end{enumerate}	
As discussed above, although there have been extensive works towards studying better VFL frameworks following these criteria, existing VFL frameworks still can not satisfy all criteria well because of the poor model applicability, the unsatisfied privacy security, expensive communication, or the inefficient computation. Thus, it is challenging to design a practical VFL framework that not only supports inexpensive communication and efficient computation but also has favorable model applicability and satisfied privacy security.

In this paper, we address this challenging problem by  revealing the promising properties of ZOO and, be inseparably interconnected, proposing a novel practical VFL framework with black-box models, under which the asynchronous zeroth-order optimization algorithms (AsyREVEL) are proposed.  Specifically, 1) ZOO only needs the function values for updating  rather than the gradients with explicit expressions and  thus can improve the model applicability of VFL to more ML problems. 2) Only black-box information \ie, function values, is necessary to be transmitted for ZOO, which can prevent existing VFL attacks under three levels of threat models, \ie, the curious, colluding, and malicious. 3) Only function values are transmitted for ZOO (have low PRCO) and the asynchronous computation is adopted for AsyREVEL, thus, ZOO-VFL can also support inexpensive communication and efficient computation.  We summarize the contributions of this paper as follows.
\begin{itemize}
	\item
	We are the first to reveal that ZOO is a desirable companion for VFL,  which not only support inexpensive communication and efficient computation but also has favorable model applicability and satisfied privacy security. Moreover, we also propose a novel practical VFL framework with black-box models, which inherits the promising properties of ZOO.
	\item
	We propose two AsyREVEL algorithms with different smoothing techniques, {\ie} AsyREVEL-Gau and -Uni, under our practical VFL framework. Moreover, we theoretically prove their convergence rates for the nonconvex problems.
\end{itemize}

\begin{table*}[!t]
	\centering
	\caption{A summary of evaluating existing VFL frameworks following these four metrics, where ERCR denotes ``exchanging the raw computation results'', TIG denotes ``transmitting intermediate gradients'', TG denotes ``transmitting  gradients'', MA denotes ``Model Aapplicability'', PS denotes ``Privacy Security'', IC denotes ``Inexpensive Communication'', CE denotes ``Computational Efficiency'',  VFL framework adopting AsyREVEL is proposed in Section \ref{sec-framework}, and the results of privacy security means which attack these methods cannot defense (``1--feature inference attack \cite{gu2020Privacy}'', ``2--label inference attack in \cite{liu2020backdoor}'', ``3--feature inference attack in \cite{luo2020feature}'', ``4--reverse multiplication and reverse sum attacks in \cite{weng2020privacy}'', ``5--backdoor attack in \cite{liu2020backdoor}'', ``--'' means that can prevent attacks  proposed in \cite{gu2020Privacy,luo2020feature,weng2020privacy,liu2020backdoor}). }
	\label{tab:comparsion}
	{\small{\begin{threeparttable}
				\begin{tabular}{@{}ccccc@{}}
					\toprule
					Methods                                                    & MA & PS & IC  & CE \\ \midrule
					Asynchronous ERCR-based methods \cite{hu2019fdml,gu2020Privacy}
					& \XSolidBrush                  & 2 \tnote{1}        & \Checkmark                        & \Checkmark                        \\
					Communication-efficient TIG-based method \cite{liu2019communication}
					& \XSolidBrush                   & 2,5        & \Checkmark                        & \XSolidBrush                        \\
					Asynchronous TG-based methods \cite{vepakomma2018split,chen2020vafl}
					& \XSolidBrush                   & 3       & \XSolidBrush                        & \Checkmark                        \\
					Communication-efficient HE-based method \cite{yang2019quasi}
					& \XSolidBrush                   & 4        & \Checkmark                        & \XSolidBrush                        \\
					Synchronous HE-based methods \cite{gong2016private,hardy2017private}
					& \XSolidBrush                   & 4        & \XSolidBrush                        & \XSolidBrush                        \\
					{\bf VFL framework adopting AsyREVEL (ours)}
					& \Checkmark                   & {\bf --}       & \Checkmark                        & \Checkmark                        \\
					\bottomrule
				\end{tabular}
				\begin{tablenotes}
					\item [1] When not all parties have the labels, these methods can not prevent the label inference attack \cite{liu2020backdoor}.
				\end{tablenotes}
	\end{threeparttable}}}
\end{table*}

\begin{figure*}[!t]
	\centering
	\begin{subfigure}{0.3\linewidth}
		\includegraphics[width=\linewidth]{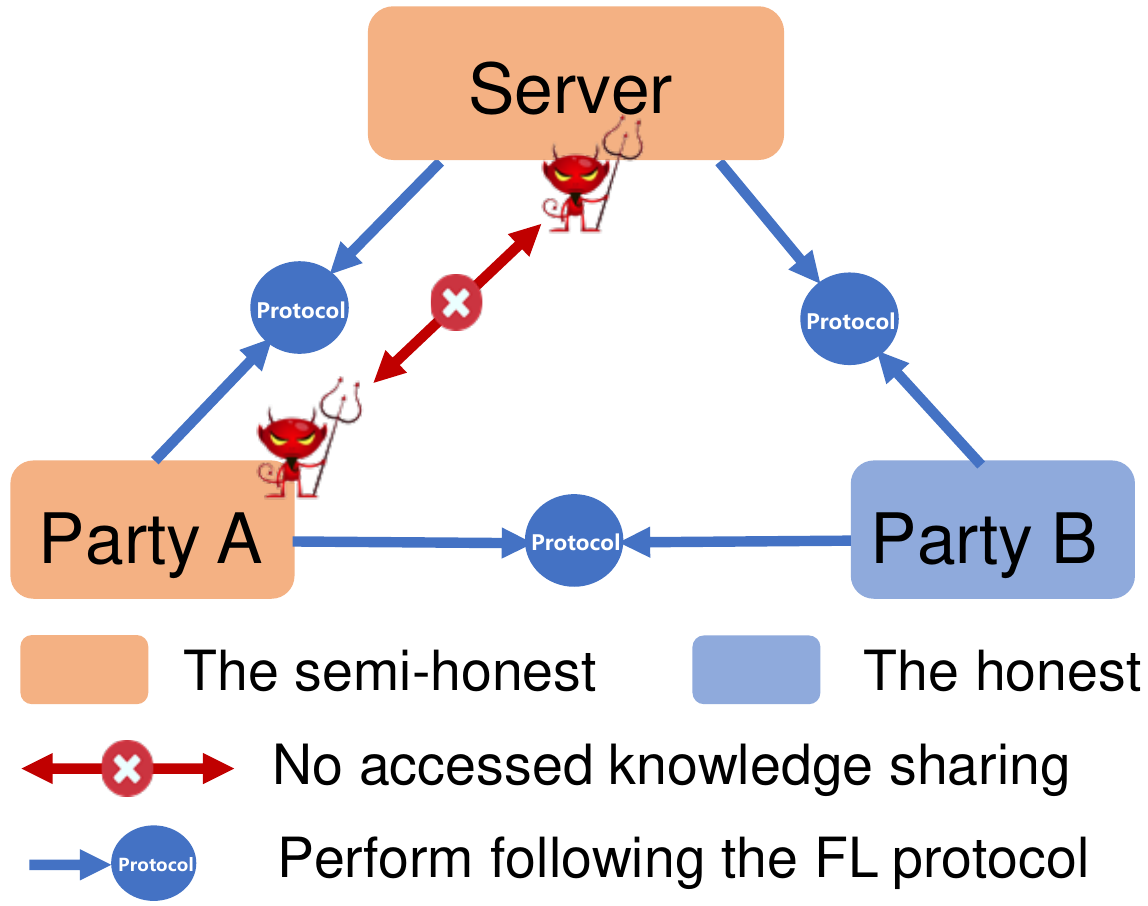}
		\caption{Semi-honest with curiosity}
	\end{subfigure}
	\qquad
	\begin{subfigure}{0.3\linewidth}
		\includegraphics[width=\linewidth]{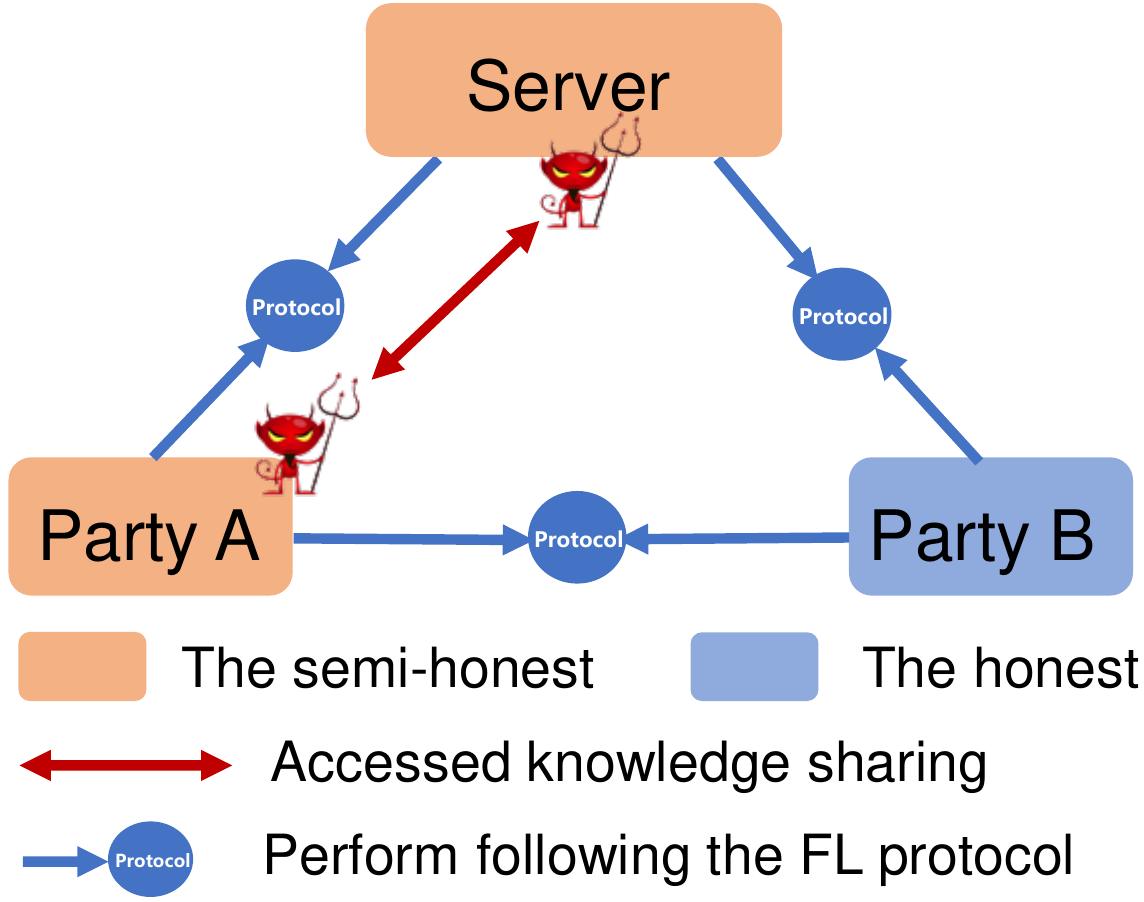}
		\caption{Semi-honest with collusion}
	\end{subfigure}
	\begin{subfigure}{0.3\linewidth}
		\qquad
		\includegraphics[width=\linewidth]{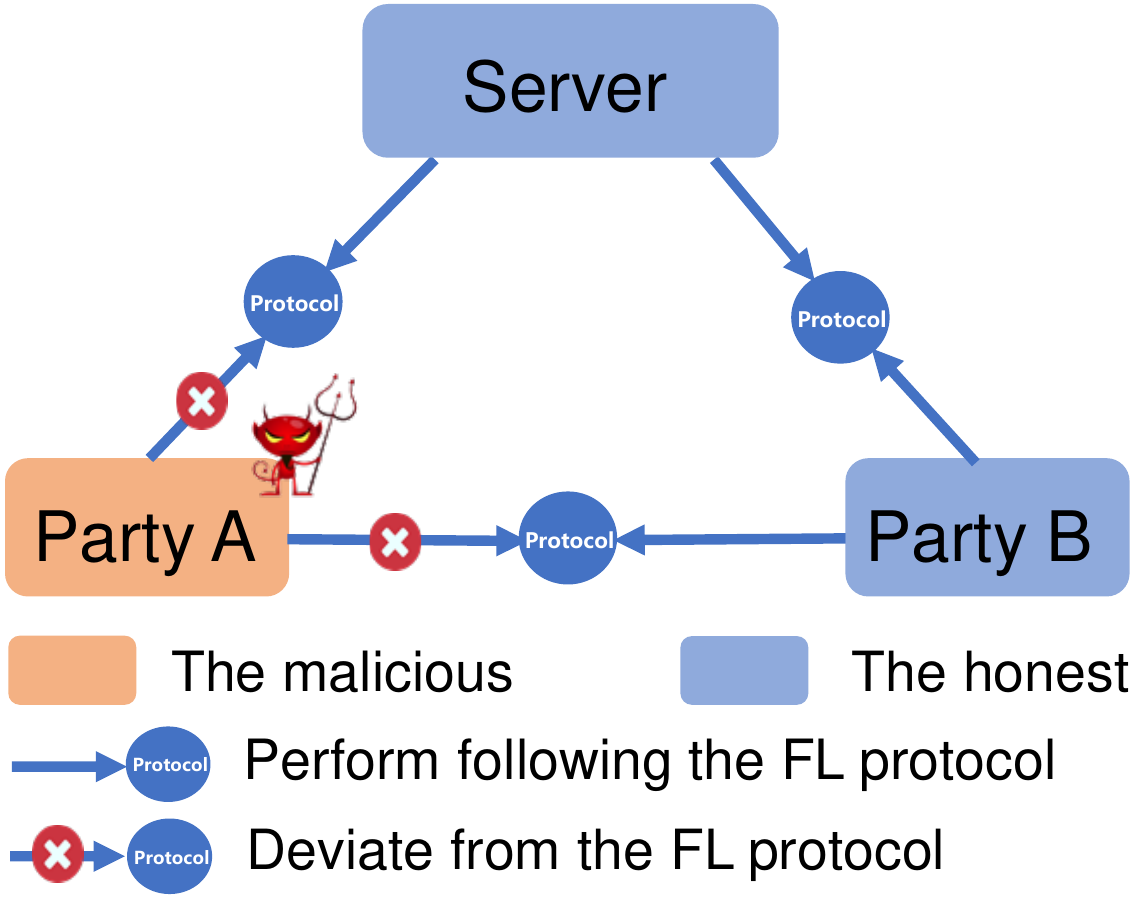}
		\caption{Malicious}
	\end{subfigure}%
	\caption{Illustration of three threat models.}
	\label{fig:threat}
\end{figure*}
\section{A Desirable Companion for VFL}
In this section, we first give a brief review to VFL and then, importantly, we reveal that ZOO is a desirable companion for VFL. Moreover, we give the thorough privacy security analyses of ZOO for VFL (named ZOO-VFL) under existing VFL attacks.
\subsection{Vertical Federated Learning}
Vertical federated learning \cite{gascon2016secure,yang2019federated,hu2019fdml,liu2019communication,gu2020Privacy} is a paradigm for multi-party collaborative learning with privacy preserving. In the VFL system, each party holds different features for one sample.
Specifically, for a VFL system with $q$ parties and training data $\{{\bf{x}}_i,y_i\}_{i=1}^n$, the ${\bf{x}}_i \in \mathbb{R}^{\bar{d}} $ can be represented as a concatenation of all feature blocks, \ie, ${\bf{x}}_i=[x_{i,1}; x_{i,2}; \cdots; x_{i,q}]$, where $x_{i,m} \in \mathbb{R}^{\bar{d}_{m}}$ is stored privately on party $m$, and $\sum_{m=1}^{q}\bar{d}_{ m} = \bar{d}$. Moreover, each party in the VFL system privately maintains and learns a local model, and all parties collaboratively learn the joint model.

Currently, much efforts have been  made towards designing better VFL frameworks for real-world applications from various aspects. In this paper, we summarize these aspects into four metrics, \ie, model applicability, privacy security, communication cost and computation efficiency, which can be used to comprehensively evaluate the performance of VFL frameworks.
Although there have been many works studying VFL following those metrics, to the best of our knowledge, existing VFL frameworks are still not well designed to match those criteria simultaneously. In the following, we reveal that ZOO is a promising choice for designing VFL framework matching these metrics.

\subsection{A Desirable Companion for VFL}

\noindent{\bf Zeroth-Order Optimization:} ZOO methods \cite{huang2020accelerated,huang2019nonconvex} have been developed to effectively solve many ML problems, whose explicit gradient expressions are difficult or infeasible to obtain, such as the structure prediction problems whose explicit gradients are difficult to obtain \cite{sokolov2018sparse}, the bandit and black-box learning problems \cite{shamir2017optimal,liu2018zeroth}, whose explicit gradients are infeasible to obtain.  Specifically, ZOO only uses the function values for optimizing instead of gradients with explicit expressions. Although there have been many works focusing on ZOO, it is still vacant to explore the application of ZOO to VFL, especially, reveal its promising properties for VFL.

In the following we present the promising properties of ZOO-VFL concerning those four practical metrics and reveal that ZOO is a desirable optimization methods for VFL.

\noindent{\bf Model Applicability:} Model applicability is a basic property for the VFL frameworks. Currently, most existing VFL frameworks adopt the gradient-based optimization methods for training. However, frameworks adopting gradient-based optimization methods have the poor model applicability to ML problems whose explicit expressions of gradients are difficult or infeasible to obtain. ZOO only needs the function values for optimizing, which thus is a promising choice to improve the model applicability of VFL to these problem.

\noindent{\bf Privacy Security:} Privacy security is the most important character distinguishing  FL from the distributed learning. Currently, there have many attack models and defense strategies  been proposed \cite{weng2020privacy,luo2020feature,liu2020backdoor}. Especially, the latest proposed inference attacks in \cite{weng2020privacy,luo2020feature,liu2020backdoor} and backdoor attack in \cite{liu2020backdoor} are difficult for existing VFL frameworks to totally defense. Two data inference attacks are proposed in \cite{weng2020privacy}, which, however, require the adversary to access the gradient of the local model and then utilize it for attack. To perform the label inference attack in \cite{liu2020backdoor},
the adversary must be able to access the intermediate  gradient. Similarly, the gradient-replacement backdoor attack proposed in \cite{liu2020backdoor} has to access the intermediate  gradient and then replace it with the targeted one.
In fact, existing attacks that are difficult to defense have to access the informative knowledge such as the model parameters and the gradients.
Thus, to prevent these attacks, one can design a VFL system with the model unknown and without transmitting the informative knowledge between the parties. {\bf{A natural and promising idea to achieve this is letting the model a black box and only transmitting the black-box knowledge}}, such as the function values (the outputs of local and global models).

However, it is impossible to leverage existing optimization methods for VFL to optimize these black-box models when only function values are transmitted. Currently, there have been many optimization methods for black-box learning, such as the Bayesian optimization \cite{karro2017black},  heuristic algorithms \cite{yoo2014modified}, and ZOO \cite{liu2018zeroth}. Among them the ZOO is the optimal choice due to its superiority of theoretical guarantee to heuristic algorithms and less computation complexity than Bayesian optimization. Thus, ZOO is a desired optimization method for improving the privacy security of VFL framework. Especially, since privacy security is considerably important for FL, in the next subsection,  we give the detailed privacy security analyses.
\noindent{\bf Communication Cost and Computation Efficiency:} Note that, in terms of ZOO-VFL, only the function values are necessary to be transmitted. Thus, ZOO-VFL is communication-inexpensive because the PRCO of only transmitting the function values is considerably low. Moreover, we can also design the corresponding asynchronous ZOO algorithm, \emph{i.e.}, AsyREVEL proposed in Section \ref{subsec-asyalgorithm}, that keeps the computation resource being utilized all the time during training for better computation efficiency. Thus, ZOO-VFL is  communication-inexpensive  and computation-efficient.

In above analyses, we reveal that ZOO is naturally a desirable optimization method for VFL. Specifically, ZOO-VFL has favourable model applicability (ability to optimize black-box models), provides satisfied privacy security (ability to defense  existing attacks for VFL), support inexpensive communication  (low PRCO), and efficient computation (adopting asynchronous computation). For a strong support to our claim, we compare a VFL framework that adopts ZOO (proposed in Section \ref{sec-framework}) with existing VFL frameworks following these four metrics and show the results in Table~\ref{tab:comparsion}.
\subsection{Privacy Security of ZOO-VFL}\label{sec-security}
In this section, we  detailedly analyze the privacy security of ZOO-VFL under following three types of threat model, which capsule existing attacks for VFL.
We introduce them as follows, whose illustrations are shown in Fig.~\ref{fig:threat}.

\noindent{\bf Honest-but-Curious:} All parties perform operations following the FL protocol but they may try to learn the private information of the other parties based on the accessed knowledge.

\noindent{\bf Honest-but-Colluding:} All parties perform operations following the FL protocol but they may collude  by sharing the accessed knowledge and use it to learn the private information of the other parties.

\noindent{\bf Malicious:} Some (adversarial) parties may perform operations deviating arbitrarily from the FL protocol, and  to learn the private information of other honest parties or inject a backdoor task by modifying, re-playing, or even removing transmitted messages.

Importantly,  we have the  theorem for the privacy security of VFL.
\begin{theorem}\label{thm-security}
	ZOO for vertical federated learning can defense existing VFL attacks under honest-but-curious, honest-but-colluding, and malicious threat models.
\end{theorem}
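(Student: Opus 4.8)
The plan is to argue by a case analysis over the three threat models, and for each model to (i) characterize precisely what information a party is able to access under the ZOO‑VFL protocol, (ii) recall the informational prerequisite of each ``existing VFL attack'' catalogued in the excerpt (the feature inference attack of \cite{gu2020Privacy}, the label inference attack of \cite{liu2020backdoor}, the feature inference attack of \cite{luo2020feature}, the reverse‑multiplication/reverse‑sum attacks of \cite{weng2020privacy}, and the gradient‑replacement backdoor attack of \cite{liu2020backdoor}), and (iii) show that the accessible information is strictly insufficient to mount that attack. The unifying observation, established in the preceding subsections, is that in ZOO‑VFL the only quantities ever transmitted between parties are black‑box function values (outputs of the private local models and of the global model), while each party's local model and the random smoothing perturbations are kept private; no model parameters and no (intermediate) gradients are ever exchanged.

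First I would handle \emph{honest‑but‑curious}. Here a curious party follows the protocol, so its view consists only of its own local data, its own local model, the perturbation vectors it itself draws, and the scalar function values it receives. I would then recall that the feature inference attack of \cite{gu2020Privacy} (item~1 in Table~\ref{tab:comparsion}) needs the exchanged raw computation results together with structural knowledge that lets an adversary invert them; since in ZOO‑VFL the coordinator only returns aggregated black‑box losses and every local model remains unknown, the map from a party's private features to the transmitted value is not invertible by the adversary, so the attack cannot be instantiated. The key sub‑claim to nail down is that an adversary cannot \emph{reconstruct} a usable gradient by issuing many function‑value queries the way ZOO itself does: this follows because the perturbation directions used in the smoothing estimator are sampled privately and are never disclosed, and an honest‑but‑curious party cannot choose the queried points, so it never obtains a controlled finite‑difference pair about another party's model.

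Next, \emph{honest‑but‑colluding}: the colluding set pools its views, but the pooled view is still a union of local data, local models, private perturbations, and transmitted scalars of the colluders only — in particular it still contains no gradient and no parameter of any honest party. I would then check the two attacks that are otherwise the most dangerous in this model: the reverse‑multiplication and reverse‑sum attacks of \cite{weng2020privacy} explicitly require the adversary to obtain the gradient of an (honest) local model, and the label inference attack of \cite{liu2020backdoor} requires access to an intermediate gradient; neither object is ever transmitted in ZOO‑VFL, so both attacks are vacuous. Finally, for the \emph{malicious} model, a corrupted party may deviate arbitrarily — crafting adversarial inputs, replaying or dropping messages — but because the global/local models are black boxes to it, (a) adversarial probing still returns only scalar losses, which by the same non‑invertibility argument do not expose features or labels, and (b) the gradient‑replacement backdoor attack of \cite{liu2020backdoor} cannot even be \emph{located}: there is no intermediate gradient in the message stream to intercept and overwrite, so the attack has nothing to act on.

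The step I expect to be the main obstacle is the non‑invertibility / non‑reconstruction sub‑claim used across all three cases, namely that transmitting black‑box function values — even many of them — does not let an adversary recover another party's features, labels, or gradients. I would support it by two ingredients: first, the local models being unknown (a genuine black box) means the adversary lacks the functional form needed to invert a loss value into the inputs that produced it; second, the smoothing perturbations being fresh and private means the adversary cannot reproduce ZOO's own finite‑difference gradient estimator on a victim's model, and under the malicious model the best it can do is query at points of its choosing through its \emph{own} interface, which never isolates a directional derivative of a \emph{different} party's model. I would then phrase the conclusion as: since every attack in the catalogued list requires either model parameters or (intermediate) gradients, and ZOO‑VFL transmits neither, each such attack is prevented under curious, colluding, and malicious threat models, which is exactly the statement of the theorem. $\hfill\square$
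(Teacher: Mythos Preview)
Your overall strategy---case analysis over threat models, and for each catalogued attack show that its informational prerequisite (model parameters or intermediate gradients) is never transmitted in ZOO--VFL---is exactly the paper's strategy. The paper's proof is essentially a checklist of this form, so at the level of ``approach'' you are aligned.

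That said, your execution diverges from the paper in three concrete ways, and one of them is a genuine gap.

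\textbf{(1) Attack--to--threat--model mapping.} The paper does not re-sort attacks by which threat model they \emph{could} be mounted under; it places each attack under the threat model assumed by its originating paper. Concretely: under \emph{honest-but-curious} it treats the feature inference of \cite{gu2020Privacy} \emph{and} the label inference of \cite{liu2020backdoor}; under \emph{honest-but-colluding} it treats the feature inference attacks of \cite{luo2020feature} and the reverse-multiplication attack of \cite{weng2020privacy}; under \emph{malicious} it treats the reverse-sum attack of \cite{weng2020privacy} and the gradient-replacement backdoor of \cite{liu2020backdoor}. You move label inference to colluding and reverse-sum to colluding; this is not fatal, but it means your case split does not match the paper's.

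\textbf{(2) Missing attack.} You list the feature inference attack of \cite{luo2020feature} in your catalogue but never actually dispatch it in any of the three paragraphs. In the paper this is precisely the attack for which the ``local models are private black boxes'' assumption does the real work: the attack requires the adversary to know the target party's model $\theta_{\mathrm{target}}$, and the paper's one-line rebuttal is that in ZOO--VFL local models are never disclosed. Your general non-invertibility remark in the last paragraph gestures at this, but you need to say it explicitly for \cite{luo2020feature}.

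\textbf{(3) Argument for the \cite{gu2020Privacy} attack.} The paper does \emph{not} invoke black-box non-invertibility here. It uses the standard underdetermined-system argument: the adversary sees equations of the form $w^\top x_i = z_i$, and recovering $(w, x_i)$ amounts to solving $n$ equations in more than $n$ unknowns, which is infeasible \cite{du2004privacy}. Your version (``the map is not invertible because the local model is unknown'') is weaker in this setting, since for linear $F_m$ the functional form \emph{is} known; what protects the features is underdetermination, not opacity of the model class.

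Finally, your ``non-reconstruction'' sub-claim---that an adversary cannot synthesize a usable gradient from many function-value observations because the smoothing directions are private---goes beyond what the paper actually proves. The paper's argument is simply ``these attacks need gradients; gradients are never transmitted; done.'' It does not attempt to rule out gradient reconstruction from oracle queries, so you are promising more than the paper delivers (and more than is needed for the stated theorem).
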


\begin{proof}	
	\noindent{\bf Honest-but-curious:} Under this setting, only inference attacks can be performed by leveraging the intermediate computational results. Specifically, the feature inference attack is considered in \cite{yang2019federated,gu2020Privacy}, where the adversary maintains the intermediate computational results of $w^{\mathrm{T}}x_i=z_i$ and uses them to infer $w^{\mathrm{T}}$ and $x_i$. While, this attack will fail in ZOO-VFL because of the inability of solving $n$ equations in more than $n$ unknowns  \cite{du2004privacy,yang2019federated,gu2020Privacy}. The label inference attack is proposed in \cite{liu2020backdoor}, which need access  the intermediate gradient $g_i=\frac{\partial{L}}{\partial{H_i}}$. The adversary uses the element values of $g_i$ and formula $g_i=\frac{\partial{L}}{\partial{H_i}}$ to refer the label of sample $i$. As for ZOO-VFL, no knowledge about the intermediate gradients is exposed, thus it can prevent such attack totally.
	
	\noindent{\bf Honest-but-colluding:} Under this setting, the feature inference attacks (FIA) and the reverse multiplication attack (RMA) are proposed in \cite{luo2020feature} and \cite{weng2020privacy}, respectively.
	In the FIA proposed in \cite{luo2020feature} is performed  the adversary party is supposed to have  its own input $x_{\mathrm{adv}}$, its local model $\theta_{\mathrm{adv}}$, local model of the target party $\theta_{\mathrm{target}}$,  and the final  prediction $z$.  And then it uses the formula $x_{\mathrm{adv}} \cdot \theta_{\mathrm{adv}} + x_{\mathrm{target}}\cdot\theta_{\mathrm{target}}=z$  to infer the feature of the target party $x_{\mathrm{target}}$ during the model prediction stage. Moreover, the generative regression network is also designed in \cite{luo2020feature} for such inference attack, which uses a generative regression network to iteratively approximate the original sample based on multiple model predictions. This attack seems very suitable for the ZOO because it also only uses the model outputs (the predictions). However, the strong primary assumption of both inference attacks that the adversary knows the local model of the target party does not hold in ZOO-VFL, where the local models are private and black-box. Thus, ZOO-VFL can prevent both FIAs totally.  In the RMA, the adversary party accesses the intermediate computational results of successive training epoches, \ie, $w_{t-1}^{\mathrm{T}}x_i$ and $w_{t}^{\mathrm{T}}x_i$,  and the gradient $g_t$, and then uses the iterative gradient-based update rule $w_{t}^{\mathrm{T}}x_i-w_{t-1}^{\mathrm{T}}x_i=-\eta g_tx_i$ to infer $x_i$ ($\eta$ is the learning rate). ZOO-VFL can prevent such RMA totally due to not transmitting the gradients necessary for such attack.
	
	\noindent{\bf Malicious:} Under this setting, the reverse sum attack and backdoor attack are proposed in \cite{weng2020privacy} and \cite{liu2020backdoor}, respectively. In the former, the adversary party encodes a magic number\footnote{https:// en.wikipedia.org/ wiki/Magic number (programming).} into the ciphertext of the first and second gradients (this operation revolves re-playing the gradient), which is used as the global unique identifier to infer the partial orders of training data.  The targeted backdoor task is to assign an attacker-chosen label to input
	data with a specific pattern (i. e. , a trigger) \cite{liu2020backdoor}. Specifically, the adversary party records the received intermediate  gradient of the target sample (denoted as $g_{\mathrm{rec}}$) and replaces the intermediate  gradient of the poisoned sample with $g_{\mathrm{rec}}$. As introduced, both reverse sum and backdoor attacks require the adversary to access the intermediate  gradient. While, ZOO-VFL does not transmit the intermediate gradients  necessary  for these attacks, thus can prevent the reverse sum and backdoor attacks totally.
	
	Thus, we have that ZOO-VFL can  defense existing VFL attacks and protect the privacy security.
	This completes the proof.
\end{proof}
\begin{figure}[!t]
	\centering
	\begin{subfigure}{0.85\linewidth}
		\includegraphics[width=\linewidth]{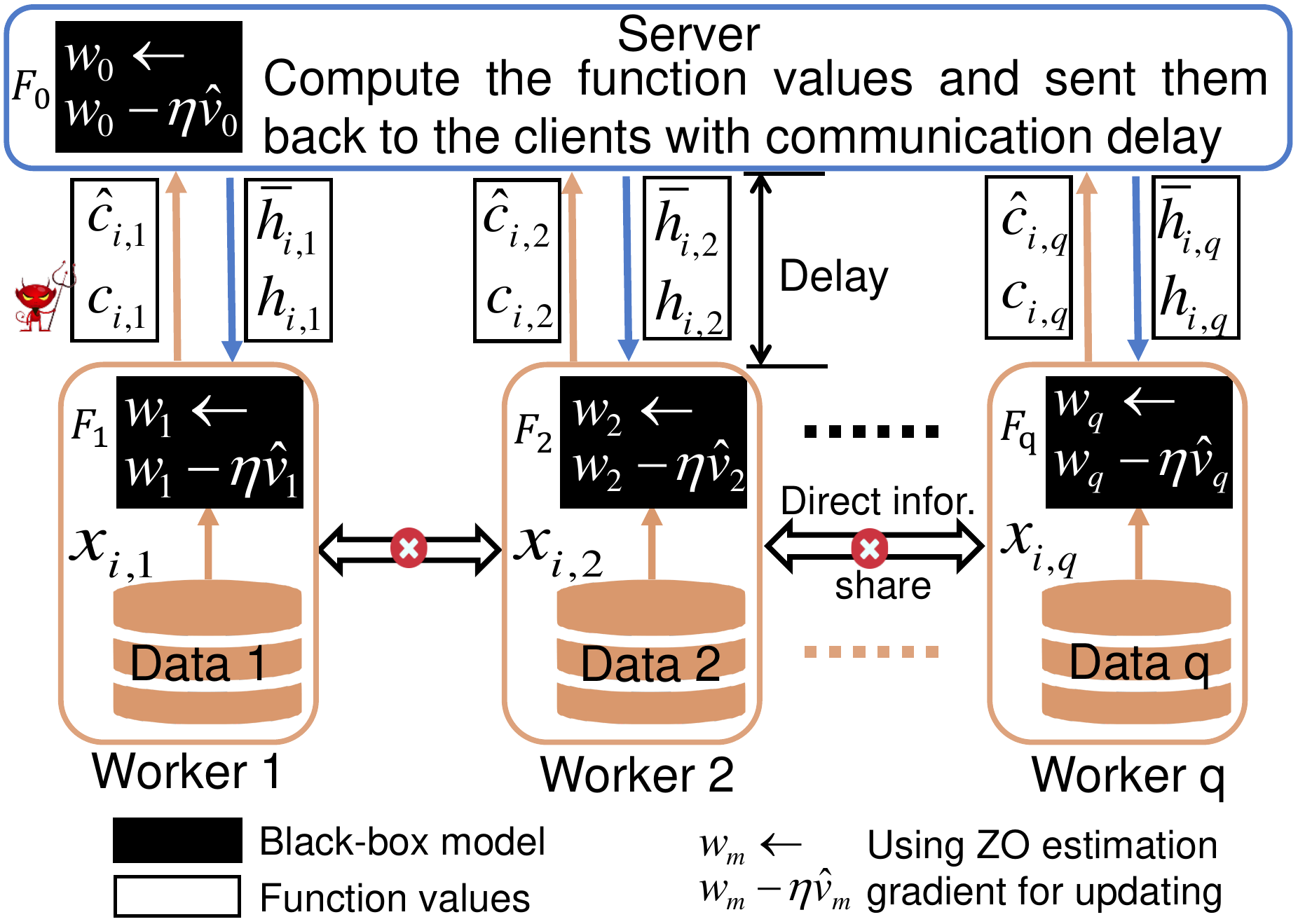}
		\label{struca}
	\end{subfigure}%
	\caption{A diagram of the proposed framework with black-box models, where only black-box knowledge (function values) is transmitted between parties and  exposed to the adversary.}
	\label{struc}
\end{figure}
In fact, all existing VFL  attacks \cite{luo2020feature,liu2020backdoor,weng2020privacy} that are difficult 
to defense have to access the informative knowledge, \ie,  the model parameters or the (intermediate) gradients. While, for ZOO-VFL, only the  black-box knowledge (function values) are exposed.
Thus, it can defense these attacks.
Moreover, it can also prevent the potential VFL attacks that have to access such informative knowledge.

\section{Practical VFL Framework and the AsyREVEL Algorithms}\label{sec-framework}
In this section,  we propose a novel practical VFL framework with black-box models and asynchronous ZOO algorithms, which inherits the promising properties of ZOO, and is inseparably interconnected by above analyses of ZOO-VFL.

\subsection{Generalized Form of VFL}
This paper considers a generalized  VFL system with $q$ parties and a server, where each party owns the vertically partitioned feature data and the server (maybe a party or trusty third-party) owns the labels. In this VFL system, all parties and the server want to solve a finite-sum problem in the following composite form
\begin{subequations}\label{P1}
	\begin{align}\label{P}
		f(w_0,{{\bf w}}) :=\frac{1}{n}\underbrace{\sum_{i=1}^{n} {{F_0\left(w_0,c_{i,1},\cdots,c_{i,q}; y_{i}\right)}} + \lambda \sum_{m=1}^{q}g(w_m)}_{f_{i}(w_0,{{\bf w}})} \quad
 \text{with} \quad c_{i,m}=F_m(w_m;x_{i,m}) \quad \forall m\in[q]\tag{P}
	\end{align}
\end{subequations}
where $f_i(w_0,{{\bf w}}):\mathbb{R}^{\bar{d}}\rightarrow \mathbb{R}$ is the cost function of the $i$-th sample,
${{\bf w}}=\{w_1,\cdots,w_q\} \in \mathbb{R}^{d}$, $w_m \in \mathbb{R}^{d_m}$ for $m\in [q]$ (given a positive integer $q$, $[q]$ denotes a set $\{1,\cdots,q\}$) defines a local  model $F_m$ on party $m$, which maps input $x_{i,m}$ to output $c_{i,m}$, $w_0\in \mathbb{R}^{d_0}$ defines a global model ${F_0}$ learned and maintained by the server, $d = \sum_{m=0}^{q}d_m$, and $g$ is the regularized function.
Especially, problem (\ref{P}) is a generalized form that capsules a wide range of machine learning models. Two examples are shown as follows.

\noindent{{\bf Generalized Linear Model:}} For $m \in [q]$, $ F_m$ can be a linear model, such as $\text{F}_m(w_m;x_{i,m})=w_m^{\mathrm{T}} x_{i,m}$. In this case, if we choose ${\text{F}_0}(c_i;y_i)={\text{log}}(1+e^{-y_i\sum_{m=1}^{q}c_{i,m}}) $ for binary classification tasks, Problem (\ref{P}) will reduce to the classical logistic regression model. We can also choose suitable ${F_0}$ to obtain other  linear models such as  linear regression and support vector machine.

\noindent{{\bf Neural Network Model:}} For $m\in[q]$, $F_m$ can also be a nonlinear model such as neural networks. In this case, $c_{i,m}$ is presented in the following composite form
\begin{subequations}
	\begin{align}
		&\text{input layer:} \ u_0 = x_{i,m}, \quad  \text{and}\ \text{output layer:}\ c_{i,m} = u_K \\
		& \text{intermediate layers:}\ u_l = \sigma_l(h_lu_{l-1} + b_l), \ \ l=1,\cdots,K
	\end{align}
\end{subequations}
where $\sigma_l$ is an active function with linear or nonlinear form, $h_l$ and $b_l$ for $l\in[K]$ correspond to the parameter  $w_m$, $K$ is the number of layer.
In this case, ${F_0}$ can be either a simple network, e.g., the fully connection networks or other complicated deep neural networks.

\subsection{Practical Vertical Federated Learning Framework with Black-Box Models}
Aiming at the generalized VFL problem in the form of (\ref{P}), we propose a novel VFL framework with black-box models, whose diagram  is presented in Fig.~\ref{struc}. As illustrated, the whole data are  vertically  stored on each party locally and privately.  Especially, the local models and the global model are black-box models, which are privately maintained and learned by the parties and server, respectively. Moreover, each local model cascades to the global model and all local models are connected by this global model. Information such as model parameter and data sharing between parties is prohibited, which thus can prevent the data and model from directly leaking. Importantly, the function values transmitted between all parties and the server is black-box knowledge, which is useful to defense existing attacks for VFL (refer to Section \ref{sec-security}). In the following, we present how to propose the AsyREVEL algorithms.

\subsection{AsyREVEL Algorithms}
\begin{algorithm}[!t]
	\caption{AsyREVEL SGD}\label{algo-zosgd}
	\begin{algorithmic}[1]
		\item initialize variables  for workers $m \in [q]$
		\WHILE{ not convergent}
		\STATE {\bf when} {\bf client} $m$ is activated, {\bf do}:
		\STATE \ \ Sample an index $i\overset{\text{Unif}}{\sim} [n]$
		\STATE \ \ Compute $c_{i,m}$, $\hat{c}_{i,m}$ and  upload them to the server
		\STATE \ \  Receive $h_{i,m}$ and  $\bar{h}_{i,m}$ from the server (in a listen manner)
		\STATE \ \  Compute $\whv_m =  \hat{\nabla}_{m}f_i(w_0,\bar{{{\bf w}}})$
		\STATE \ \ Update $w_m\leftarrow w_m-\eta_m \whv_m $
		\STATE {\bf when server} receives $c_{i,m}$ and $\hat{c}_{i,m}$, {\bf do}:
		\STATE \ \ Compute $h_{i,m}$, $\bar{h}_{i,m}$, $\hat{h}_{i,m}$, and sent $h_{i,m}$, $\bar{h}_{i,m}$ to client $m$
		\STATE \ \  Compute $\whv_0 =  \hat{\nabla}_{0}f_i(w_0,\bar{{{\bf w}}})$
		\STATE \ \ Update $w_0\leftarrow w_0-\eta_0 \whv_0 $
		\ENDWHILE
	\end{algorithmic}
\end{algorithm}
\label{subsec-asyalgorithm}
Given a function $F_i(\bar x)$,  a typical two-point stochastic gradient estimator for ZOO is defined as
\begin{align}\label{eq:twopoint}
	\hat{\nabla} F_i(\bar x)=\frac{d_{\bar x}}{{\mu}}[F_i(\bar x + {\mu} u_i) - F_i(\bar x)]u_i
\end{align}
where $\bar x\in \mathbb{R}^{d_{\bar x}}$, ${\mu}>0$ is the smoothing parameter,  and random directions $\{u_i\}$ are i.i.d. drawn from a specific distribution.

However, it is difficult to apply this zeroth-order estimation (ZOE) technique to the VFL due to the much different problem form and application scenario. Specifically, the models to be optimized are distributed over the parties and the server but in a composite form. As shown in Fig.~\ref{struc}, each local model cascades to the global model and all local models are connected by this global model, which is much different from the existing problem forms for ZOO. This  leads to a challenging problem of  designing a proper ZOO algorithm  for our proposed VFL framework.

In this paper, we apply the ZOE technique to each model separately, \ie, to estimate $\frac{\partial F_0}{\partial w_m}$, $m=0,1,\cdots,q$. Because if we take all black-box models (both local and global) as a whole and then apply ZOE technique to estimate $\frac{\partial F_0}{\partial [w_0,{{\bf w}}]}$, we can not leverage the feature-distributed character of VFL and can only design the synchronous algorithms. Moreover, we use the cascade relation between each $F_m$ ($m\in[q]$) and $F_0$ to compute the function value of $F_0$, and then use it to compute the zeroth-order estimation of $\frac{\partial F_0}{\partial w_m}$ directly. Note that we do not apply ZOE technique to  $\frac{\partial F_0}{\partial F_m}$ and $\frac{\partial F_m}{\partial w_m}$ separately, and then use the chain rule, \ie, $\frac{\partial F_0}{\partial w_m} = \frac{\partial F_0}{\partial F_m}\frac{\partial F_m}{\partial w_m}$, to compute the ZOE of $\frac{\partial F_0}{\partial w_m}$. Because the multiplication of two ZOE of gradient  will introduce extra variance.

Motivated by the above analyses and  Eq.~(\ref{eq:twopoint}), we defined the  ZOE of $f_i$ with respect to (w.r.t.) $w_m, \ m=1,\cdots,q$, as
\begin{align}\label{eq:zerogradient}
	\hat{\nabla}_{m} f_i(w_0,{{\bf w}})=\frac{d_{m}}{{\mu_m}}[f_i(w_m + {\mu_m} u_{i,m}) - f_i(w_m)]u_{i,m}
\end{align}
where $f_i(w_m + {\mu_m} u_{i,m}) = f_i(w_0,\cdots, w_m + {\mu_m} u_{i,m},\cdots)$ denotes function $f_i(w_0,{{\bf w}})$ with the other parameters fixed and only $w_m$ as the variable, $d_m$ is the dimension of $w_m$, ${\mu_m}>0$ is the smoothing parameter, and $\{u_{i,m}\}$ are i.i.d. random directions drawn from different distributions. For notation brevity, we define that $c_i = \{c_{i,m}\}_{m=1}^q$ contains function values of sample $i$ from all parties. And $f_i(w_m)$ and $f_i(w_m + {\mu_{m}} u_{i,m})$  are computed as follows.
\begin{align}\label{eq-1}
	f_i(w_m) &={F_0}(w_0,c_i)+ \lambda g(w_m)
	= {h}_{i,m}  + \lambda g(w_m),\nonumber\\
	f_i(w_m + {\mu_m} u_{i,m}) &={F_0}(w_0,c_{i,-m})+ \lambda g(w_m+{\mu_m} u_{i,m})
	\nonumber\\
	& = \bar{h}_{i,m}  + \lambda g(w_m+{\mu_m} u_{i,m})
\end{align}
where $c_{i,-m} = \{\{c_{i,j}\}_{j\in[q],j\ne m},\hat{c}_{i,m}\}$ means $c_i$ with $c_{i,m}$ being replaced by $\hat{c}_{i,m}=F_m(w_m+{\mu_m} u_{i,m};x_{i,m})$.
For $w_0$, there is
\begin{align}\label{eq-2}
	\hat{\nabla}_{0}f_i = \frac{d_{w_0}}{{\mu_m}}(\hat{h}_{i,m} - h_{i,m})u_{i,m},
\end{align}
where $\hat{h}_{i,m} = {F_0}(w_0 + {\mu_m} u_{i,m},c_i)$.

\noindent{\bf AsyREVEL algorithm:}
The proposed AsyREVEL algorithm under our VFL framework is shown in Algorithm \ref{algo-zosgd}.
At step 4, the activated party $m$ computes $c_{i,m}$ and $\hat{c}_{i,m}$ using its private data and local model and then sent them to the server. When the server receives $c_{i,m}$ and $\hat{c}_{i,m}$ from party $m$, it uses them  together with the other parties' function values received previously (stored in the server) to compute $h_{i,m}$, $\bar{h}_{i,m}$ and $\hat{h}_{i,m}$.  Note that those function values of  the other $q-1$ parties are steal due to the asynchronously updating. At step 9, the server then uses $h_{i,m}$ and $\hat{h}_{i,m}$ to compute the ZOE  of $\nabla_{0}f_i$ following Eq.~(\ref{eq-2}).  For client $m$, it needs to query the server for the values of $h_{i,m}$ and  $\bar{h}_{i,m}$  and then uses them to compute the ZOE of local gradient at step 6. Note that, $\bar{{{\bf w}}}$ used at step 6 is the steal state of ${{\bf w}}$  because of both the asynchronous updates and communication delay. An auxiliary illustration of these steps is shown in Fig.~\ref{struc}.

Moreover, we consider two different AsyREVEL algorithms, \ie, AsyREVEL-Gau and -Uni. Specifically, the algorithmic steps of them are the same as those of Algorithm \ref{algo-zosgd}, while the random directions used in Eqs.~(\ref{eq:zerogradient}) and (\ref{eq-2}) are i.i.d. drawn from a zero-mean isotropic multivariate Gaussian distribution for AsyREVEL-Gau and a uniform distribution over a unit sphere for AsyREVEL-Uni.

\section{Convergence Analysis and Complexity Analysis}
In this section, we provide the convergence and complexity analyses of our proposed AsyREVEL algorithms. Note that we only give the sketch of convergence analysis and one can refer to the arXiv version of this paper for the details. First we present some  preliminaries necessary for the convergence analysis.
\begin{assum}\label{assum0}
	Function $f$ is bounded below that is,
	\begin{align}
		f^*:=\inf_{[w_0,{{\bf w}}]\in \mathbb{R}^d} f(w_0,{{\bf w}}) > -\infty.
	\end{align}
\end{assum}
\begin{assum}\label{assum1}
	For $f_i$, $i=1,\ldots,n$ in problem (\ref{P}), we assume the following conditions hold:\\
	\noindent {\bf{Lipschitz Gradient:}} $\nabla f_i$ is $L$-Lipschitz continuous, i.e., there exists a constant $L$ for $ \forall \ [w_0,{{\bf w}}], [w_0', {{\bf w}}']$  such that
	\begin{equation}
		\|\nabla f_i (w_0,{{\bf w}}) - \nabla f_i (w_0',{{\bf w}}')\| \le L\|[w_0,{{\bf w}}]-[w_0',{{\bf w}}']\|. \nonumber
	\end{equation}
	and there exists an $L_m>0$ for $m=0,\cdots,q$ such that $\nabla_{m} f_i$ is $L_m$-Lipschitz continuous.
	
	\noindent {\bf{Bounded Block-Coordinate Gradient:}} For $m=0,\cdots,q$, there exists a constant $\sigma_m$ such that  $\|\nabla_{m} f_i(w_0,{{\bf w}})\|^2\leq \sigma_m^2$.
\end{assum}
Above assumptions are standard in previous optimization works \cite{zhang2021faster,huang2019faster,huang2019nonconvex,huang2019faster1},
where Assumption \ref{assum0} guarantees the feasibility of problem (\ref{P}), Assumption \ref{assum1} imposes (block-coordinate) smoothness on the individual functions and introduces bounded block-coordinate  gradients.
We also introduced Assumption~\ref{assum2} to handle the asynchronous updates, which is helpful for tracking the behavior of the global model.
\begin{assum}\label{assum2}
	The activated client $m_t$ is independent of $m_0$, $\cdots,$ $m_{t-1}$ and satisfies $\mathbb{P}(m_t=m):=p_m$
\end{assum}
Moreover, the function values of  the other $q-1$ parties used to compute $h_{i,m}$ (or $\hat{h}_{i,m}$, $\bar{h}_{i,m}$) are steal due to the asynchronously updating manner and possible communication delay. To handle this case, we introduce the following assumption to bound the delay.
\begin{assum}\label{assum3}{\bf{Bounded Delay:}} For $\bar{w}_t $ that is the $w$ used for computing at current iteration $t$, there is
	\begin{equation}\label{eq-assum3}
		\bar{{{\bf w}}}^t = {{{\bf w}}}^{t-\tau_{t}^{n,m}} = {{{\bf w}}}^t + \eta_{m_{t'}} \sum_{t'\in D^\prime(t)}\widehat{v}^{t'}_{m_{t'}},
	\end{equation}
	where $D'(t)=\{t-1,\cdots,t-\tau_{t}^{n,m}\}$ is a subset of previous iterations and $\tau_{t}^{n,m}\leq \tau$.
\end{assum}

\subsection{Convergence Analyses}
\begin{figure*}[!t]
	\centering
	\begin{subfigure}{0.24\linewidth}
		\includegraphics[width=\linewidth]{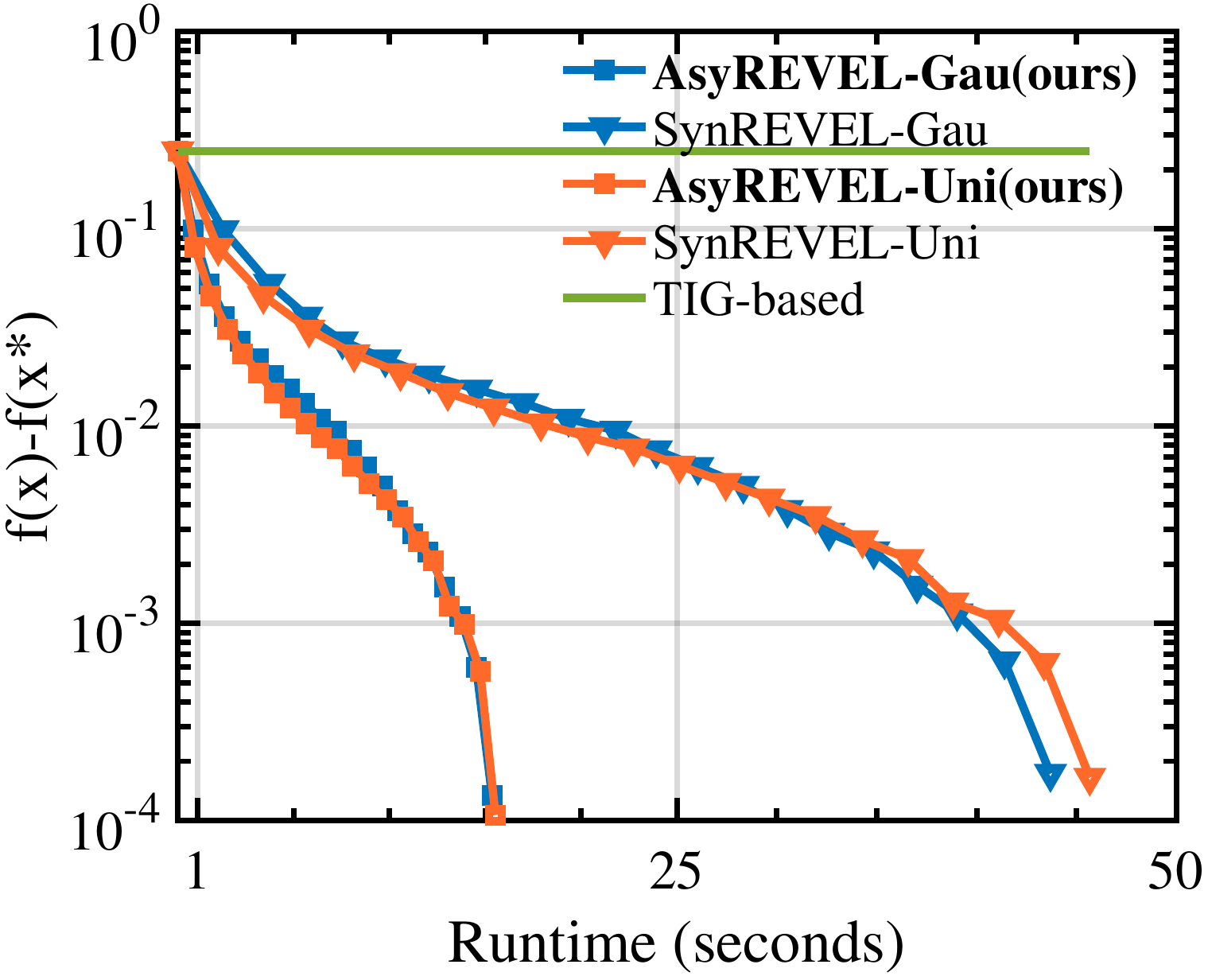}
		\caption{Data: $D_1$}
	\end{subfigure}
	\
	\begin{subfigure}{0.24\linewidth}
		\includegraphics[width=\linewidth]{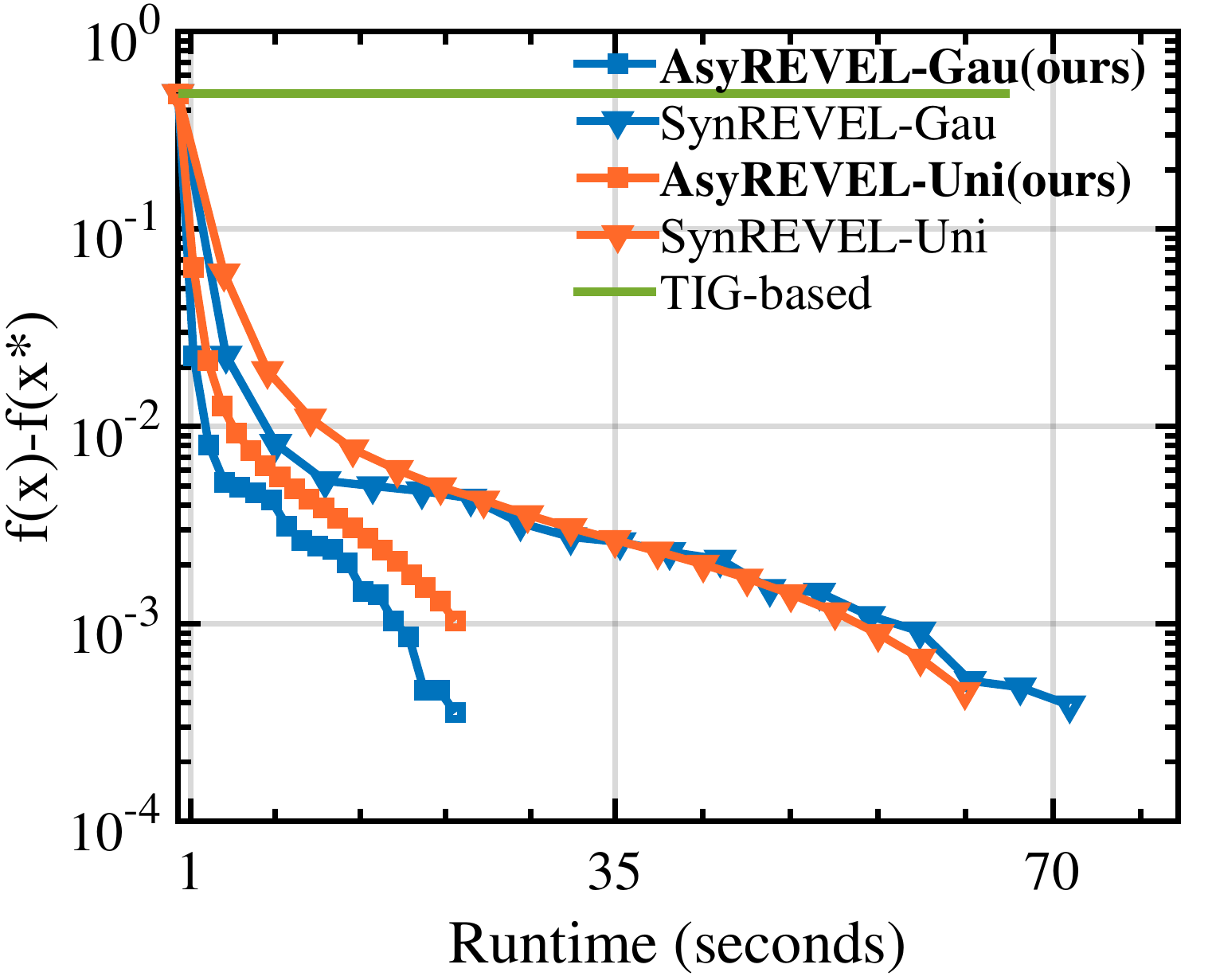}
		\caption{Data: $D_2$}
	\end{subfigure}
	\
	\begin{subfigure}{0.24\linewidth}
		\includegraphics[width=\linewidth]{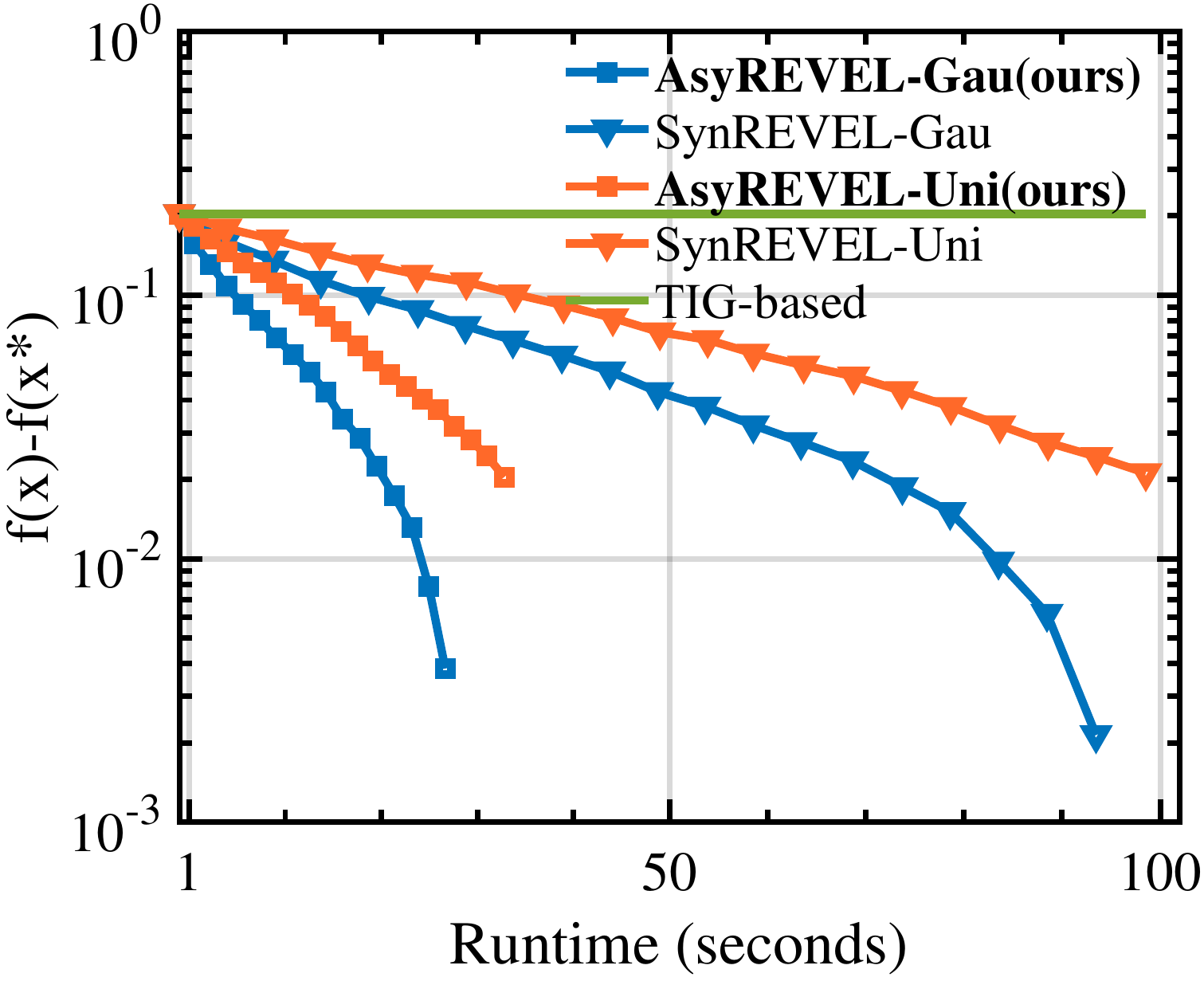}
		\caption{Data: $D_3$}
	\end{subfigure}%
	\
	\begin{subfigure}{0.24\linewidth}
		\includegraphics[width=\linewidth]{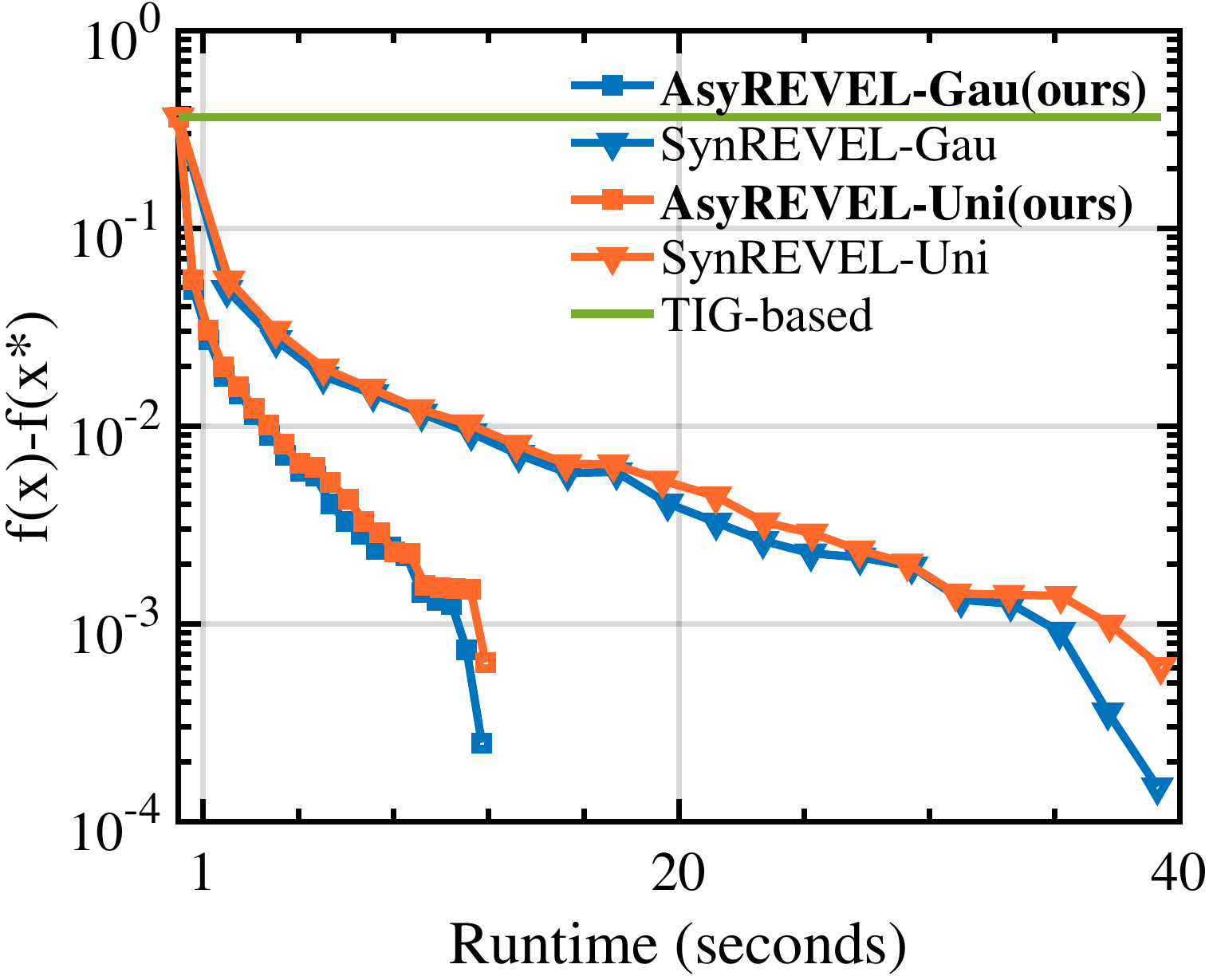}
		\caption{Data: $D_4$}
	\end{subfigure}
	\\
	\begin{subfigure}{0.24\linewidth}
		\includegraphics[width=\linewidth]{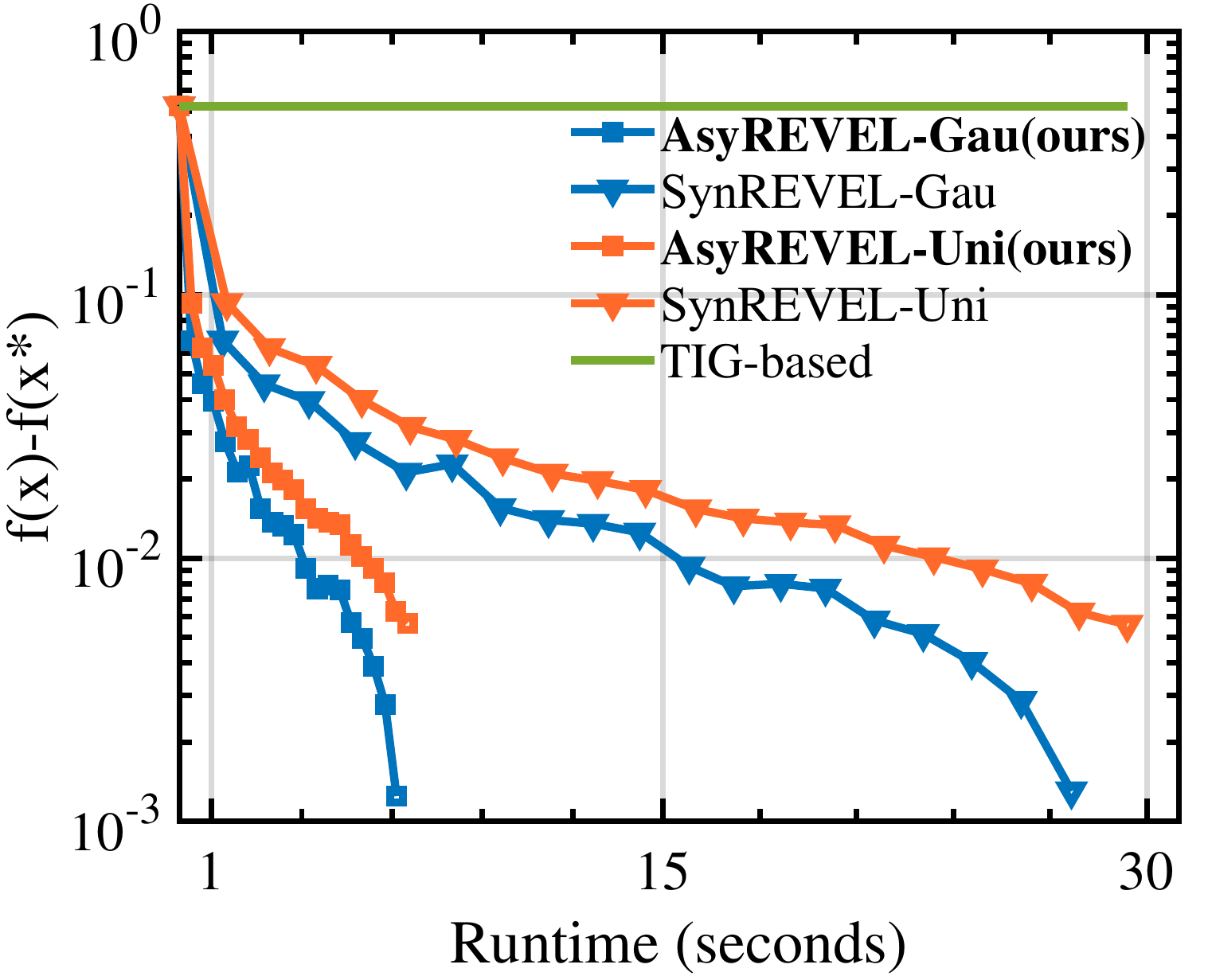}
		\caption{Data: $D_5$}
	\end{subfigure}
	\
	\begin{subfigure}{0.24\linewidth}
		\includegraphics[width=\linewidth]{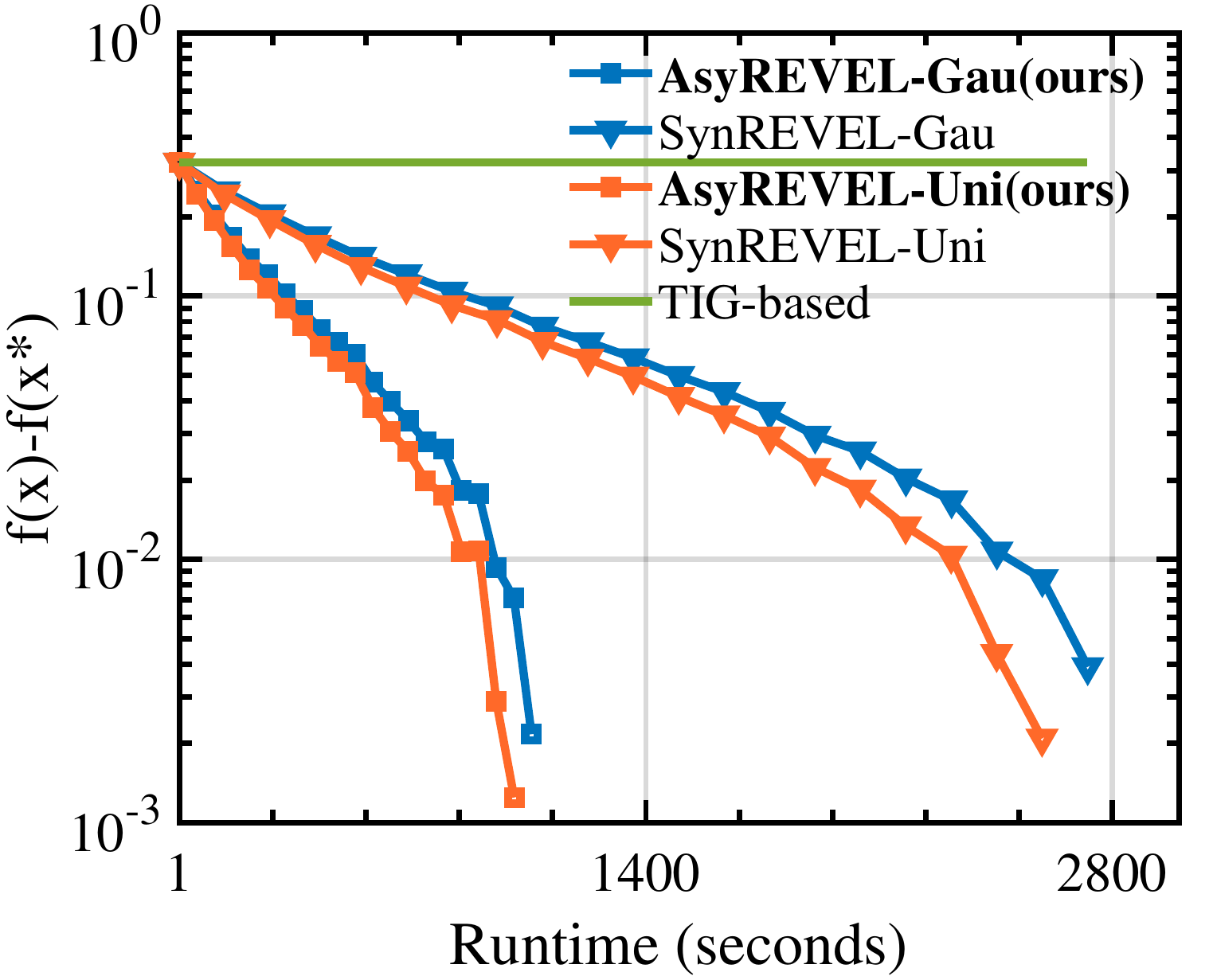}
		\caption{Data: $D_6$}
	\end{subfigure}
	\
	\begin{subfigure}{0.24\linewidth}
		\includegraphics[width=\linewidth]{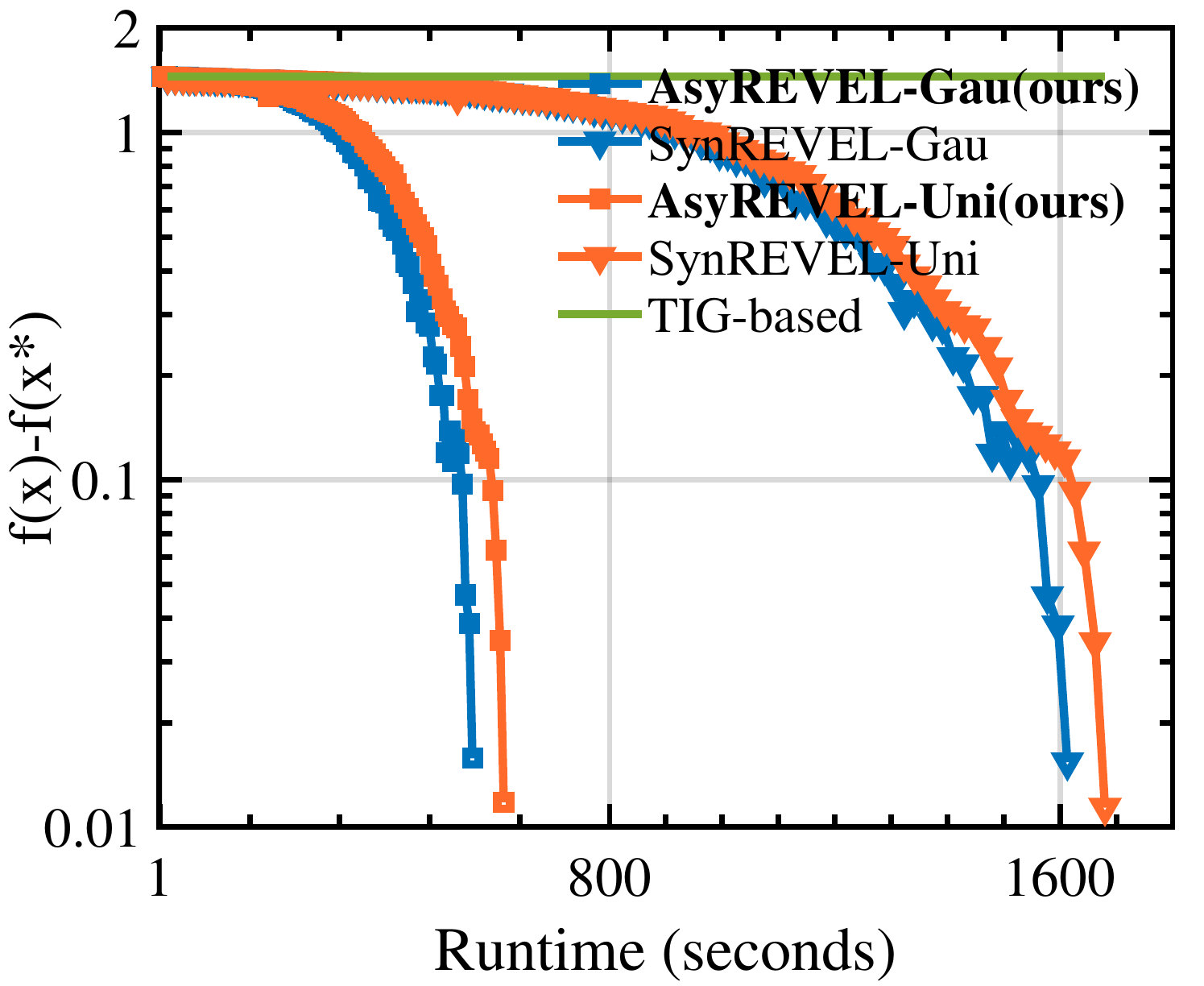}
		\caption{Data: $D_7$}
	\end{subfigure}%
	\
	\begin{subfigure}{0.24\linewidth}
		\includegraphics[width=\linewidth]{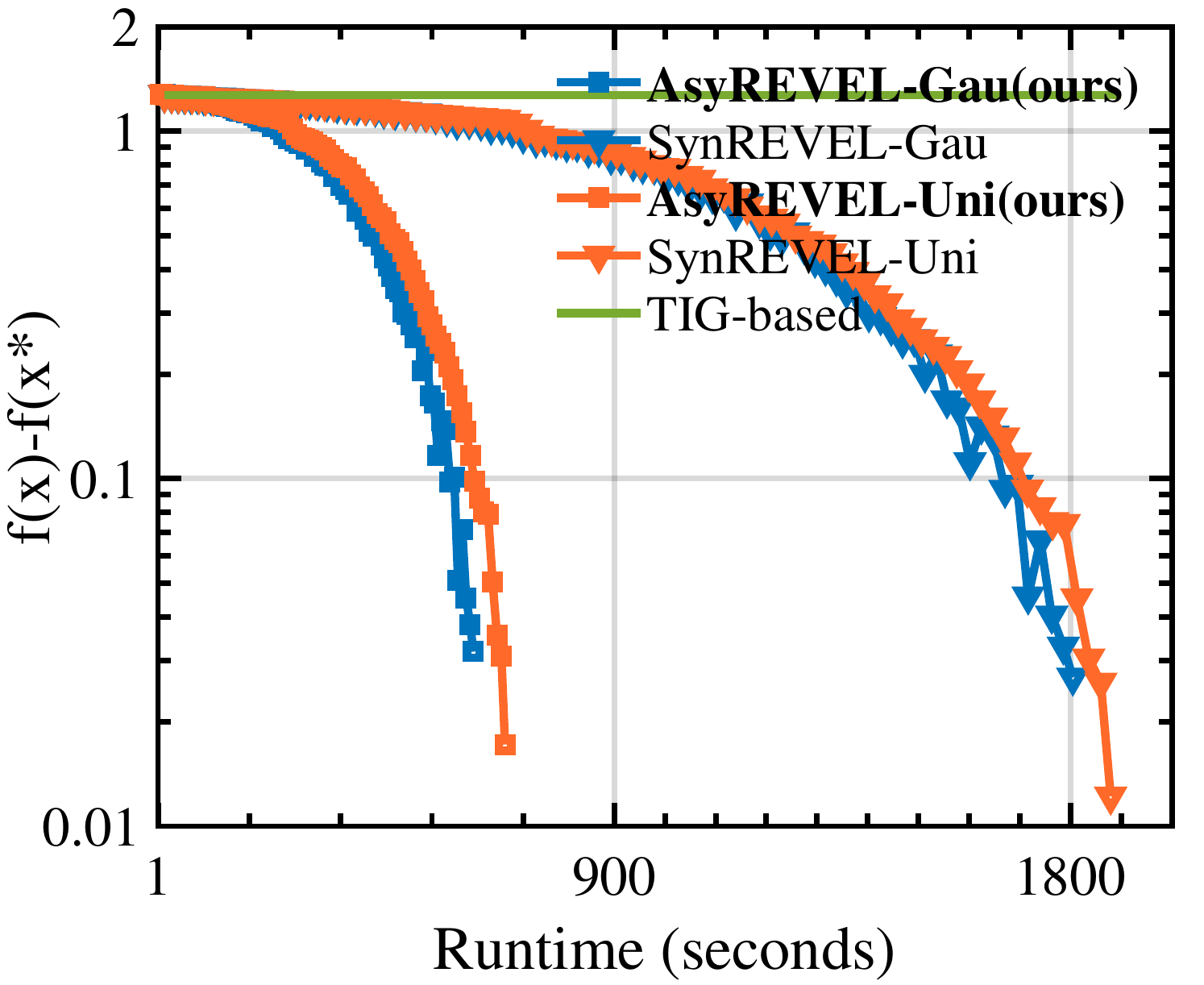}
		\caption{Data: $D_8$}
	\end{subfigure}
	\caption{Results for solving black-box federated learning problem on different datasets.}
	\label{Exp-ncon}
\end{figure*}
\begin{theorem}\label{thm-gau}
	Under Assumptions~\ref{assum0}-\ref{assum3},
	to solve problem~\ref{P} with AsyREVEL-Gau,
	let $\eta=\min\{\frac{1}{4(\tau+1)L},\frac{m_0}{\sqrt{T}}\}$ with constant $m_0>0$ and ${\mu_m}=\mathcal{O}(\frac{1}{\sqrt{T}})$ such as ${\mu_m} = \frac{1}{\sqrt{T}L_*d_*^{3/2}}$, then we have
	\begin{align}\label{eq-coro-gau}
		\frac{1}{T}\sum_{t=0}^{T-1}\BE\|\nabla f(w_0,{{\bf w}})\|^2
		&\leq \frac{4p_*(f^0-f^*)}{\sqrt{T}m_0}
		+ \frac{{8{p_*}m_0(L+\tau L)\sigma_*^2}}{\sqrt{T}}
		\nonumber \\
		&
		+\frac{{(q+1)}}{2T^2L_*d_*^2}
		+  \frac{(q+1)+3p_*}{2T}
	\end{align}
	where $d_* = \max_{m}d_m+3$, $p_*=\min_mp_m$, $\tau$ is independent of $T$.
\end{theorem}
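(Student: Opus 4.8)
The plan is to run a nonconvex randomized block-coordinate stochastic-gradient analysis, combined with the classical Gaussian-smoothing estimates for zeroth-order gradients and an extra argument to absorb the staleness permitted by Assumption~\ref{assum3}. First I would record the block-wise Nesterov--Spokoiny inequalities for the estimator in Eq.~(\ref{eq:zerogradient}): writing $f_{i,\mu_m}$ for the Gaussian-smoothed version of $f_i$ in the $w_m$ coordinate, $\BE_{u_{i,m}}[\hat{\nabla}_m f_i(w_0,{{\bf w}})]=\nabla_m f_{i,\mu_m}(w_0,{{\bf w}})$, the bias obeys $\|\nabla_m f_{i,\mu_m}-\nabla_m f_i\|\le\tfrac{\mu_m}{2}L_m(d_m+3)^{3/2}$, and the second moment obeys $\BE_{u_{i,m}}\|\hat{\nabla}_m f_i\|^2\le 2(d_m+3)\|\nabla_m f_i\|^2+\tfrac{\mu_m^2}{2}L_m^2(d_m+3)^3$, which by the bounded-gradient part of Assumption~\ref{assum1} is at most $2(d_m+3)\sigma_m^2+\tfrac{\mu_m^2}{2}L_m^2(d_m+3)^3$. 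The constants $d_*=\max_m d_m+3$, $\sigma_*=\max_m\sigma_m$, $L_*=\max_m L_m$ collect these block quantities; the $+3$ in $d_*$ is exactly the $(d_m+3)$ appearing here, and an identical statement holds for the $w_0$ block through Eq.~(\ref{eq-2}).

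Next I would write the one-step progress. Since at iteration $t$ only the activated block $m_t$ is updated, $L_{m_t}$-smoothness of $f$ in that block gives
$$f(w^{t+1})\le f(w^t)-\eta\langle\nabla_{m_t}f(w^t),\whv^t_{m_t}\rangle+\tfrac{L_{m_t}\eta^2}{2}\|\whv^t_{m_t}\|^2.$$
Taking conditional expectation over the sampled index $i$ and the random direction turns $\whv^t_{m_t}$ in the inner product into the Gaussian-smoothed block gradient $\nabla_{m_t}f_{\mu_{m_t}}$ evaluated at the \emph{stale} state $\bar w^t$ of Assumption~\ref{assum3}. I would decompose this as $\nabla_{m_t}f(w^t)$ plus (i) the smoothing bias bounded above and (ii) $\nabla_{m_t}f(\bar w^t)-\nabla_{m_t}f(w^t)$, whose norm is at most $L\|\bar w^t-w^t\|\le L\eta\sum_{t'\in D'(t)}\|\whv^{t'}_{m_{t'}}\|$ by Assumption~\ref{assum3}, with each $\BE\|\whv^{t'}_{m_{t'}}\|^2\le 2d_*\sigma_*^2+\mathcal{O}(\mu^2)$ and $|D'(t)|\le\tau$. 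A Young's inequality on the two cross terms keeps a $-\Theta(\eta)\|\nabla_{m_t}f(w^t)\|^2$ term provided the step size satisfies $\eta\le\tfrac{1}{4(\tau+1)L}$; this is precisely why that cap appears and why the delay enters as the factor $(L+\tau L)$ in the second term of Eq.~(\ref{eq-coro-gau}).

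Then I would take expectation over $m_t$ via Assumption~\ref{assum2}: since $\BE_{m_t}\|\nabla_{m_t}f(w^t)\|^2=\sum_{m=0}^q p_m\|\nabla_m f(w^t)\|^2$ while $\|\nabla f(w^t)\|^2=\sum_{m=0}^q\|\nabla_m f(w^t)\|^2$, the activation probabilities enter the progress through $p_*=\min_m p_m$. Summing the resulting recursion over $t=0,\dots,T-1$, telescoping with $f(w^0)-f(w^T)\le f^0-f^*$, and dividing by $T$ yields a bound of the shape $\tfrac{c_1(f^0-f^*)}{\eta T}+c_2\,\eta(L+\tau L)\sigma_*^2+c_3\sum_m\mu_m^2 L_m^2 d_m^3+c_4\sum_m\mu_m^2 L_m^2 d_m^2$, with the $p_*$ and $q+1$ factors carried explicitly. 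Substituting $\eta=\min\{\tfrac{1}{4(\tau+1)L},\tfrac{m_0}{\sqrt T}\}$ — for which the $\tfrac{m_0}{\sqrt T}$ branch is the active one once $T$ is large, $\tau$ being $T$-independent — turns the first two terms into $\tfrac{4p_*(f^0-f^*)}{\sqrt T m_0}$ and $\tfrac{8p_*m_0(L+\tau L)\sigma_*^2}{\sqrt T}$, and plugging the prescribed $\mu_m=\tfrac{1}{\sqrt T L_* d_*^{3/2}}$ collapses the two smoothing contributions into $\tfrac{q+1}{2T^2 L_* d_*^2}$ and $\tfrac{(q+1)+3p_*}{2T}$, which is exactly Eq.~(\ref{eq-coro-gau}).

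The step I expect to be the main obstacle is the coupled treatment of staleness and the inflated variance of the zeroth-order estimator: bounding $\|\bar w^t-w^t\|$ forces uniform second-moment control of the stale updates $\whv^{t'}_{m_{t'}}$, but that control is itself stated in terms of $\|\nabla_{m_{t'}}f\|^2$, so to avoid reintroducing uncontrolled gradient norms on the right-hand side one must invoke the bounded-gradient part of Assumption~\ref{assum1} at exactly this juncture and pay for it with the $\tfrac{1}{4(\tau+1)L}$ step-size cap and the $(L+\tau L)$ blow-up. A secondary hazard is bookkeeping: carrying the $(d_m+3)$ and $(d_m+3)^{3/2}$ powers from the Gaussian-smoothing lemmas consistently through the telescoping so that, after the stated choice of $\mu_m$, the remainder is precisely $\mathcal{O}(T^{-2})+\mathcal{O}(T^{-1})$ and not a larger order. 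The AsyREVEL-Uni case follows the same skeleton with the uniform-sphere smoothing identities replacing the Gaussian ones, which only alters the numerical constants hidden in $d_*$.
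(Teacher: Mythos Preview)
Your high-level skeleton is sound and would deliver the $\mathcal{O}(1/\sqrt{T})$ rate, but it departs from the paper's argument in two structural ways and contains one oversight about the algorithm itself.

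\textbf{Oversight.} You write that ``at iteration $t$ only the activated block $m_t$ is updated,'' but in Algorithm~\ref{algo-zosgd} the server updates $w_0$ \emph{every} time a client is activated. The one-step descent must therefore carry two inner products and two second-moment terms (for $\whv_0^t$ and $\whv_{m_t}^t$), and after taking expectation over $m_t$ the $w_0$ contribution appears with coefficient $1$, not $p_0$; only the client term acquires the $p_m$ weights. Your identity $\BE_{m_t}\|\nabla_{m_t}f\|^2=\sum_{m=0}^q p_m\|\nabla_m f\|^2$ is thus not quite the right object. The paper's proof (Eq.~\eqref{eq:proof-2}--\eqref{eq:proof-3}) keeps the two contributions separate and only merges them at the end via $\min\{\eta_0,p_m\eta_m\}$.

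\textbf{Different route for staleness.} Where you propose to bound $\|\bar{\BW}^t-\BW^t\|$ directly by plugging the uniform second-moment bound $\BE\|\whv^{t'}_{m_{t'}}\|^2\le 2d_*\sigma_*^2+\mathcal{O}(\mu^2)$ into Assumption~\ref{assum3}, the paper instead introduces a Lyapunov function
\[
M^t=f_{\mu}(w_0^t,\BW^t)+\sum_{i=1}^{\tau}\theta_i\|\BW^{t+1-i}-\BW^{t-i}\|^2,
\]
with $\theta_1=\frac{3\eta\tau^2L^2/2}{1-4\tau^2\eta^2L^2}$ and a decreasing recursion $\theta_{i+1}=\theta_i-\tfrac{3}{2}\eta L^2\tau-4\tau\theta_1 L^2\eta^2$, chosen so that the delay terms telescope and vanish (Eqs.~\eqref{eq:proof-6}--\eqref{eq:proof-10}). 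The practical difference is in the constants: your direct bound injects the dimension factor $d_*$ into the $\sigma_*^2$ term (since the Gaussian second-moment estimate carries $2(d_m+4)\|\nabla_m f\|^2$), whereas the Lyapunov route tracks $\|\whv\|^2$ through a recursion that ultimately yields the dimension-free $(L+\tau L)\sigma_*^2$ appearing in Eq.~\eqref{eq-coro-gau}.

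\textbf{Descent on $f$ versus $f_\mu$.} You apply block smoothness to $f$ and handle the smoothing bias inside the inner product. The paper instead runs the entire descent on the smoothed surrogate $f_{\mu}$, for which the zeroth-order estimator is unbiased, and only at the very end converts back to $\nabla f$ and $f^0-f^*$ via Eqs.~\eqref{eq:proof-12}--\eqref{eq:proof-14}. Your route is legitimate but forces you to carry the bias term through the Young's-inequality step; the paper's route isolates the smoothing error as a clean additive $\sum_m\mu_m^2 L_m^2(d_m+3)^3$ correction.
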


\begin{theorem}\label{thm-uni}
	Under Assumptions~\ref{assum0}-\ref{assum3},
	to solve problem~\ref{P} with AsyREVEL-Uni,
	let $\eta=\min\{\frac{1}{4(\tau+1)L},\frac{m_0}{\sqrt{T}}\}$ with constant $m_0>0$ and ${\mu_m}=\mathcal{O}(\frac{1}{\sqrt{T}})$ such as ${\mu_m} = \frac{1}{\sqrt{T}L_*d_*}$, then we have
	\begin{align}\label{eq-coro-gau}
		\frac{1}{T}\sum_{t=0}^{T-1}\BE\|\nabla f(w_0,{{\bf w}})\|^2
		&\leq \frac{4p_*(f^0-f^*)}{\sqrt{T}m_0}
		+ \frac{{8{p_*}m_0(L+\tau L)\sigma_*^2}}{\sqrt{T}}
		\nonumber \\
		&
		+\frac{{(q+1)}}{2T^2L_*d_*^2}
		+  \frac{(q+1)+3p_*}{2T}
	\end{align}
	where $d_* = \max_{m}d_m$, $p_*=\min_mp_m$, $\tau$ is independent of $T$.
\end{theorem}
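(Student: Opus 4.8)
The plan is to adapt the standard analysis of asynchronous block-coordinate zeroth-order SGD to the cascaded composite structure of problem~(\ref{P}); the only substantive difference from the proof of Theorem~\ref{thm-gau} lies in the smoothing constants. First I would record the uniform-smoothing facts: writing $\hat{f}_{i,\mu_m}$ for the surrogate obtained by averaging $f_i$ over $w_m+\mu_m u_{i,m}$ with $u_{i,m}$ drawn uniformly on the unit sphere, the estimator $\hat{\nabla}_m f_i$ of Eq.~(\ref{eq:zerogradient}) (and likewise $\hat{\nabla}_0 f_i$ of Eq.~(\ref{eq-2})) is unbiased for $\nabla_m \hat{f}_{i,\mu_m}$, with bias bound $\|\nabla_m \hat{f}_{i,\mu_m}-\nabla_m f_i\|\le \mu_m L_m d_m/2$ and second-moment bound $\BE_{u_{i,m}}\|\hat{\nabla}_m f_i\|^2 \le 2d_m\|\nabla_m f_i\|^2 + \mu_m^2 L_m^2 d_m^2/2$. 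These are precisely the inequalities in which the power of $d_m$ differs from the Gaussian case, and they dictate the choice $\mu_m=1/(\sqrt{T}L_* d_*)$ with $d_*=\max_m d_m$.

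Next I would derive a one-step descent inequality. At iteration $t$ the server block $w_0$ and the sampled client block $w_{\mt}$ are both moved along their zeroth-order directions evaluated at the stale iterate $(w_0^t,\bar{{\bf w}}^t)$, so the $L$-smoothness in Assumption~\ref{assum1} gives $f^{t+1}\le f^t - \eta\langle \nabla f^t,\whv\rangle + \tfrac{L}{2}\eta^2\|\whv\|^2$. Conditioning on the algorithmic history and taking expectation over the direction $u_{i,\mt}$, the sample index $i$, and the active client $\mt$ (using Assumption~\ref{assum2}), the inner-product term becomes $-\eta p_*$ times a sum of $\|\nabla_m f\|^2$ contributions plus an $\bo(\mu_m L_m d_m)$ bias correction handled by the first step, while the quadratic term is bounded via the second-moment bound together with the bounded-block-gradient assumption $\|\nabla_m f_i\|^2\le \sigma_m^2$.

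Then I would absorb the staleness. By Assumption~\ref{assum3}, $\bar{{\bf w}}^t-{\bf w}^t$ is a sum of at most $\tau$ past increments, each of size $\bo(\eta\sigma_*)$ by the second-moment bound, hence $\|\nabla f(w_0^t,{\bf w}^t)-\nabla f(w_0^t,\bar{{\bf w}}^t)\|\le L\tau\eta\sigma_*$; feeding this through Cauchy--Schwarz and Young's inequality turns the cross terms generated above into $\|\nabla f\|^2$ terms that are absorbed on the left-hand side once $\eta\le 1/(4(\tau+1)L)$, plus the residual $\bo(\eta(L+\tau L)\sigma_*^2)$ that appears in the statement. Telescoping the resulting recursion $\BE f^{t+1}\le \BE f^t - c\eta\,\BE\|\nabla f^t\|^2+\text{error}$ over $t=0,\dots,T-1$, dividing by $c\eta T$, invoking $f^T\ge f^*$ (Assumption~\ref{assum0}), and substituting $\eta=\min\{\tfrac{1}{4(\tau+1)L},\tfrac{m_0}{\sqrt{T}}\}$ together with $\mu_m=\tfrac{1}{\sqrt{T}L_* d_*}$ then produces the four terms of the bound, arising respectively from $\tfrac{f^0-f^*}{\eta T}$, the variance/staleness residual, the $\mu_m^2$ bias contribution, and the remaining $\bo(1/T)$ terms.

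The step I expect to be the main obstacle is the staleness bookkeeping: keeping the delay dependence linear in $\tau$ rather than quadratic requires care about which quantities are measurable with respect to which $\sigma$-algebra and a judicious choice of the Young's-inequality weights, so that $\|\nabla f\|^2$ at both the current and the up-to-$\tau$ stale iterates can be collected on the left-hand side under the condition $\eta\le 1/(4(\tau+1)L)$. Carrying the regularizer $\lambda g(w_m)$ through the smoothing step is also needed, but it is routine once $\hat{f}_{i,\mu_m}$ is defined on all of $f_i$ (including $\lambda g$) rather than on $F_0$ alone, since Eq.~(\ref{eq-1}) already displays $f_i(w_m)$ and $f_i(w_m+\mu_m u_{i,m})$ in exactly the form the estimator needs.
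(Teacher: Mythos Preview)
Your high-level plan is sound and would yield an $\mathcal{O}(1/\sqrt{T})$ bound, but it differs from the paper's proof in two structural respects.

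First, the paper does \emph{not} run the descent analysis on $f$; it works with the smoothed surrogate $f_\mu$ throughout (so that $\hat{\nabla}_m f_i$ is an unbiased estimator of $\nabla_m f_\mu$) and only converts back to $\|\nabla f\|^2$ at the very end via the bias bound $\|\nabla_m f_\mu - \nabla_m f\|^2 \le \mu_m^2 L_m^2 d_m^2/4$ and the function-value bound $|f_\mu - f|\le L_m d_m \mu_m^2/2$. This keeps the inner-product step clean: with $V_m^t=\nabla_m f_\mu(w_0^t,\BW^t)$ and $v_m^t=\nabla_m f_\mu$ at the current (non-stale) point, one writes $\langle V_m^t,\whv_m^t\rangle = \|V_m^t\|^2 + \langle V_m^t,\whv_m^t-v_m^t\rangle$ and controls the second piece by $\|\whv_m^t - v_m^t\|^2$.

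Second, and more substantively, the delay is \emph{not} handled by your direct bound ``each past increment of size $\mathcal{O}(\eta\sigma_*)$''. Instead the paper introduces a Lyapunov potential
\[
M^t \;=\; f_\mu(w_0^t,\BW^t) \;+\; \sum_{i=1}^{\tau}\theta_i\,\|\BW^{t+1-i}-\BW^{t-i}\|^2,
\]
with a decreasing sequence $\theta_1>\cdots>\theta_\tau\ge 0$ chosen as $\theta_1=\tfrac{(3/2)\eta\tau^2L^2}{1-4\tau^2\eta^2L^2}$ and $\theta_{i+1}=\theta_i-\tfrac{3}{2}\eta\tau L^2 - 4\tau\theta_1 L^2\eta^2$. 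The point is that both the one-step descent inequality and the bound on $\|\whv\|^2$ produce terms in $\|\BWW^t-\BW^t\|^2\le \tau\sum_{i=1}^\tau\|\BW^{t+1-i}-\BW^{t-i}\|^2$, and the $\theta_i$-weighted sum is engineered precisely so that these terms telescope against the Lyapunov increments. The step-size constraint $\eta\le 1/(4(\tau+1)L)$ is what guarantees $\theta_1\le \tau L/2$ and $\theta_\tau\ge 0$. Your sketch would also need such a mechanism: bounding each $\|\whv^{t'}\|^2$ directly by the ZO second-moment estimate introduces a factor $d_m$ in front of $\sigma_m^2$, which would give $d_*\sigma_*^2$ rather than $\sigma_*^2$ in the final bound; the paper instead routes $\|\whv\|^2$ through $v_m^t$ (Lemma~2) so that the $\sigma_m^2$ contribution enters without the dimension factor, at the price of the recursive $\|\BWW^t-\BW^t\|^2$ term that the Lyapunov construction then absorbs.
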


\begin{remark}\label{remark-gau}
	Under Assumptions~\ref{assum0}-\ref{assum3}, given the parameters in corresponding theorems, the convergence rates of both AsyREVEL-Gau and -Uni are $\mathcal{O}(\frac{1}{\sqrt{T}})$.
\end{remark}

\subsection{Complexity Analyses}
The total computation complexity at steps 4, 6 and 7 is $\bo(d_m)$, and that at steps 10, 11 and 12 is  $\bo(d_0)$. Thus, the whole computation complexity of Algorithm~\ref{algo-zosgd} is $\bo(d_m+d_0)$. Importantly, only the local outputs and global outputs are transmitted between the party $m$ and the server, and the total communication complexity of Algorithm~\ref{algo-zosgd} is  $ \bo(1)$. Thus, our framework is communication-inexpensive compared with those transmitting the (intermediate) gradients.
\begin{table*}[!t]
	\centering
	\caption{Dataset Descriptions.}
	\label{dataset}
	\begin{tabular}{@{}ccccccccc@{}}
		\toprule
		\multirow{2}{*}{}
		& \multicolumn{6}{c}{For logistic regression task} & \multicolumn{2}{c}{For deep learning task}  \\ \cmidrule(l){2-7} \cmidrule(l){8-9}
		& $D_1$ & $D_2$ & $D_3$ & $D_4$  & $D_5$ & $D_6$ & $D_7$ & $D_8$\\
		\midrule
		\#Samples & 24,000 & 96,257 & 677,399 & 32,561
		&45,749& 400,000 & 60,000 & 60,000 \\
		\#Features & 90 & 92 & 47,236 & 127
		& 300 & 2,000 & 784 &784  \\ \bottomrule
	\end{tabular}
\end{table*}	
\section{Experiments}
In this section, we implement extensive experiments to demonstrate the model applicability, privacy security, inexpensive communication  and efficient computation  of our proposed algorithms. Moreover, we also show that  AsyREVEL is scalable and lossless.

\noindent {\bf Experiment Settings:} All experiments are performed on a machine with four sockets, and each sockets has 12 cores. The MPI is used for communication. Following previous works, we vertically partition the data into $q$  non-overlapped parts with nearly equal number of features.
An optimal $\eta$ for all client is chosen  from $\{5e^{-1},1e^{-1},\cdots\}$, and the learning rate for server is $\eta/q$.

\noindent {\bf Datasets:}
We use eight datasets for evaluation, which are summarized in Table~\ref{dataset}, among which $D_1$ (UCICreditCard), $D_2$ (GiveMeSomeCredit), $D_3$ (Rcv1), $D_4$ (a9a), $D_5$ (w8a) and $D_6$ (Epsilon) are used for logistic regression problem, $D_7$ (MNIST) and $D_8$ (Fashion MNIST) are used for the deep learning tasks.

\noindent{\bf Framework for Comparison:} We introduce a framework that has the same structure of our framework but directly transmits the intermediate gradient (called TIG-based framework, refer to \cite{liu2020backdoor,vepakomma2018split} for details) instead of the function values. Specifically, in TIG-based framework, intermediate gradient $\frac{\partial F_0}{\partial F_m}$ is computed by the server and transmitted to party $m$, and then party $m$ uses the chain rule, \ie, $\frac{\partial F_0}{\partial w_m} = \frac{\partial F_0}{\partial F_m}\frac{\partial F_m}{\partial w_m}$ to compute the local gradient.

\subsection{Evaluation of Favorable Model Applicability}
To evaluate the model applicability of our proposed VFL framework we introduce two generalized  black-box learning problems for VFL, where only the local and global outputs  are transmitted.

\noindent {\bf Black-Box Federated Logistic Regression:} Specifically, we optimize the nonconvex logistic regression problem
\begin{equation}\label{P-nonconvex}
	\min_{{{\bf w}} \in \mathbb{R}^d} f({{\bf w}}):=\frac{1}{n} \sum_{i=1}^{n}  {\text{log}}(1+e^{-y_i {{\bf w}}^{\mathrm{T}} x_i}) + \frac{\lambda}{2} \sum_{i=1}^{d} \frac{{{\bf w}}_i^2}{1+{{\bf w}}_i^2},
\end{equation}
where $\lambda=1e^{-4}$ for all experiments. This is an example of  generalized linear mode, where $F_m(w_m;x_{i,m})=w_m^\top x_{i,m}$ and ${F_0}(c_i;y_i)={\text{log}}(1+e^{-y_i\sum_{m=1}^{q}c_{i,m}})$. In this case, we set $\eta=0.001$ and $\mu=0.001$ for all experiments.

\noindent {\bf Black-Box Federated Neural Network:} For this case, we train a fully connected network (FCN)-based model. Specifically, the local embedding mode is a 2-layer FCN ($784\times 128$ and $128\times 1$) with nonlinear activation function (ReLU) and the global model is a 1-layer ($q\times 10$) FCN and a softmax layer. In this case, we set $\eta=0.002$ for AsyREVEL-Gau and $\eta=7.5$ for -Uni, and set  $\mu=0.001$ for all experiments.

The loss v.s. training time results in Figs.~\ref{Exp-ncon} show that our framework can solve the black-box models while the TIG-based can not because it can not compute the gradient necessary for updating.

\subsection{Evaluation of Inexpensive Communication}
To demonstrate that our framework is communication-inexpensive, we compute the ratios of time spending on TIG relative to that of transmitting the function values. The corresponding results are listed in Table~\ref{table:communication}, which show that the PRCO of our framework is much lower than that of the TIG-based one, especially when the gradient has high dimension. Note that, AsyREVEL has the convergence rate of $1/\sqrt{T}$ for nonconvex problems, which is the same as that of general asynchronous SGD for nonconvex VFL problems \cite{hu2019fdml}. To further reduce the communication cost, one can adopt the variance reduction techniques \cite{liu2018zeroth} (for better convergence rate) or local SGD technique \cite{liu2019communication} to reduce the number of communication rounds.
\begin{table*}[!t]
	\centering
	\caption{Ratios of time spending, which are obtained during the training process of fixed number (\emph{e.g.}, $n$) of iterations (10 trials).}
	\label{table:communication}
	\setlength{\tabcolsep}{0.75mm}{\small{
			\begin{tabular}{@{}cccccccccc@{}}
				\toprule
				& & $D_1$($d_\ell=12$) & $D_2$($d_\ell=12$) & $D_3$($d_\ell=5904$) & $D_4$($d_\ell=16$)
				& $D_5$($d_\ell=37$) & $D_6$($d_\ell=250$) & $D_7$($d_\ell=98$) & $D_8$($d_\ell=98$)
				\\ \midrule
				&Ratios &1.065 & 1.078 & 5.794 & 1.192
				& 1.192 & 1.824& 1.672 & 1.672 \\
				\bottomrule
	\end{tabular}}}
\end{table*}
\begin{table*}[!t]
	\centering
	\caption{Accuracy of different algorithms to evaluate the losslessness  of our algorithms (10 trials).}
	\label{exp-lossless}
	\setlength{\tabcolsep}{1.0mm}{\small{\begin{tabular}{@{}cccccccccc@{}}
			\toprule
			&Algorithm& $D_1$(\%) & $D_2$(\%) & $D_3$(\%) & $D_4$(\%)
			& $D_5$(\%) & $D_6$(\%) & $D_7$(\%) & $D_8$(\%)
			\\ \midrule
			& NonF & 81.93$\pm$0.36 & 93.50$\pm$0.28 & 95.24$\pm$0.06 & 85.16$\pm$0.08
			& 89.85$\pm$0.08 & 87.79$\pm$0.09 & 91.89$\pm$0.25 & 81.32$\pm$0.11
			\\
			&{\bf AsyREVEL-Gau} & 81.93$\pm$0.24 & 93.50$\pm$0.31 & 95.24$\pm$0.14 & 85.16$\pm$0.08
			& 89.85$\pm$0.10 & 87.79$\pm$0.11 & 91.89$\pm$0.29 & 81.32$\pm$0.15
			\\
			\midrule
			& NonF & 81.88$\pm$0.10 & 93.48$\pm$0.09 & 95.14$\pm$0.12 & 85.14$\pm$0.12
			& 89.88$\pm$0.07 & 87.89$\pm$0.12 & 91.84$\pm$0.32 & 81.45$\pm$0.11
			\\
			&{\bf AsyREVEL-Uni}  & 81.88$\pm$0.14 & 93.48$\pm$0.11 & 95.14$\pm$0.09 & 85.14$\pm$0.09
			& 89.88$\pm$0.12 & 87.89$\pm$0.07 & 91.84$\pm$0.38 & 81.45$\pm$0.09
			\\
			\bottomrule
	\end{tabular}}}
\end{table*}
\subsection{Evaluation of Computation Efficiency}
To demonstrate the efficiency of asynchronous computation, we compare AsyREVEL algorithm with its synchronous counterpart, {\ie} SynREVEL. When implementing the synchronous algorithms, there is a synthetic straggler party which maybe 20\% to 60\% slower than the faster one to simulate the industry application scenario.

{\noindent {\bf Asynchronous Efficiency:}} In these experiments, we set $q=8$.
As  shown in Fig.~\ref{Exp-ncon}, the loss v.s. runtime curves demonstrate that our algorithms are more computation-efficient than the synchronous ones.

{\noindent {\bf Asynchronous Scalability:}} We also consider the asynchronous speedup scalability in terms of $q$. Given $q$ parties, there is
\begin{equation}
	\text{$q$-parties speedup} =\frac{\text{training time  of using  1 party}}{\text{training time of using $q$ parties}},
\end{equation}
where training time is the time spending on reaching a certain precision of sub-optimality, \ie, $5e^{-4}$ for $D_4$. The results  are shown in Fig.~\ref{Exp-sca}, which   demonstrate that our asynchronous algorithms has much better $q$-parties speedup scalability than the synchronous ones and can achieve near linear speedup.

\subsection{Evaluation of Losslessness}
To demonstrate the losslessness of our algorithms, we compare AsyREVEL with its non-federated (NonF) counterpart whose only difference to AstREVEL is that all data are integrated together for modeling. For datasets without testing data, we split the data set into $10$ parts, and use one of them for testing. Each comparison is repeated 10 times with $q=8$, and a same stop criterion, \emph{e.g.}, $5e^{-4}$ for $D_4$. As shown  in Table~\ref{exp-lossless}, the accuracies of our algorithms are the same with those of NonF algorithms.
~
\begin{figure}[!t]
	\centering
	\begin{subfigure}{0.47\linewidth}
		\includegraphics[width=\linewidth]{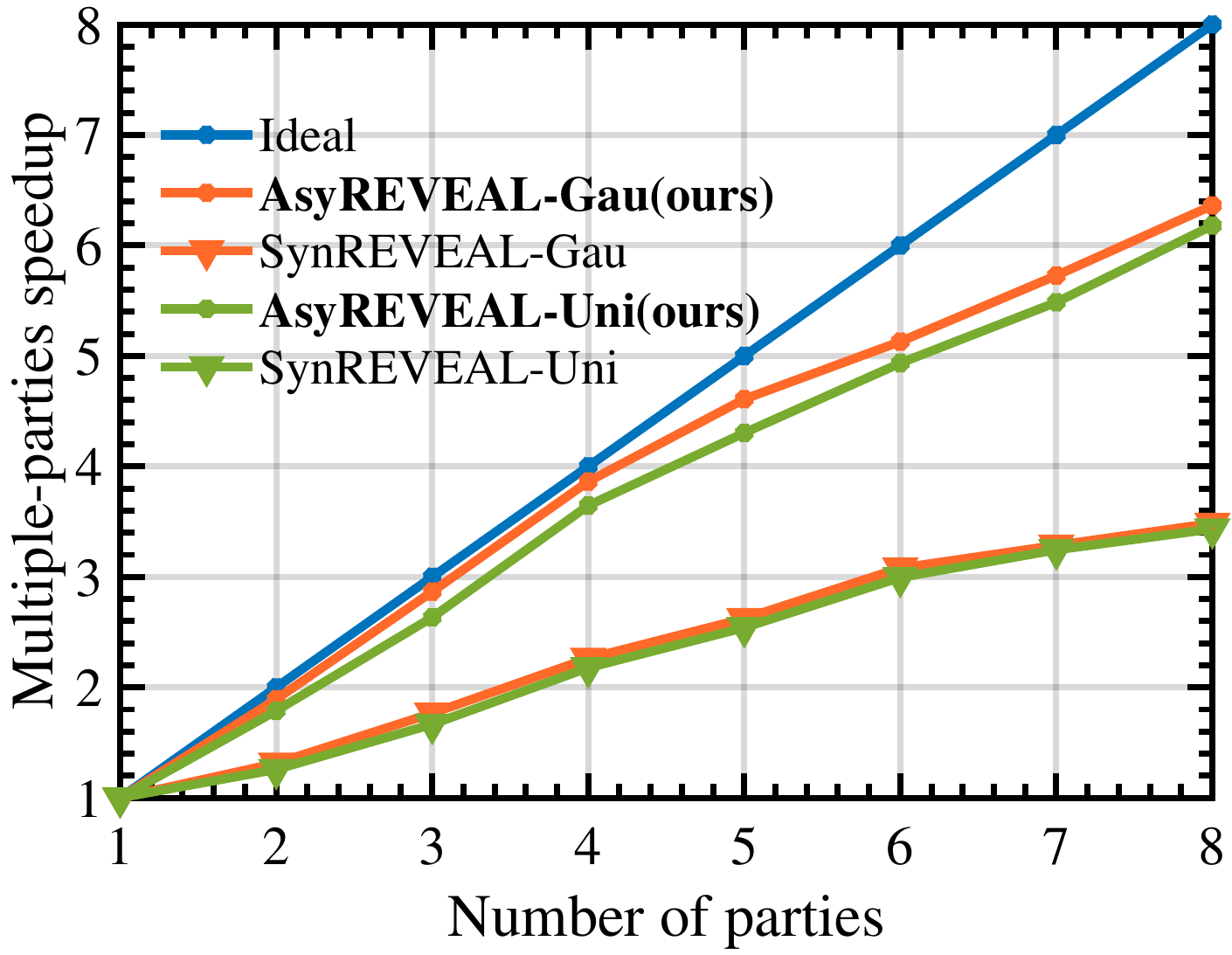}
	\end{subfigure}
	\quad
	\begin{subfigure}{0.47\linewidth}
		\includegraphics[width=\linewidth]{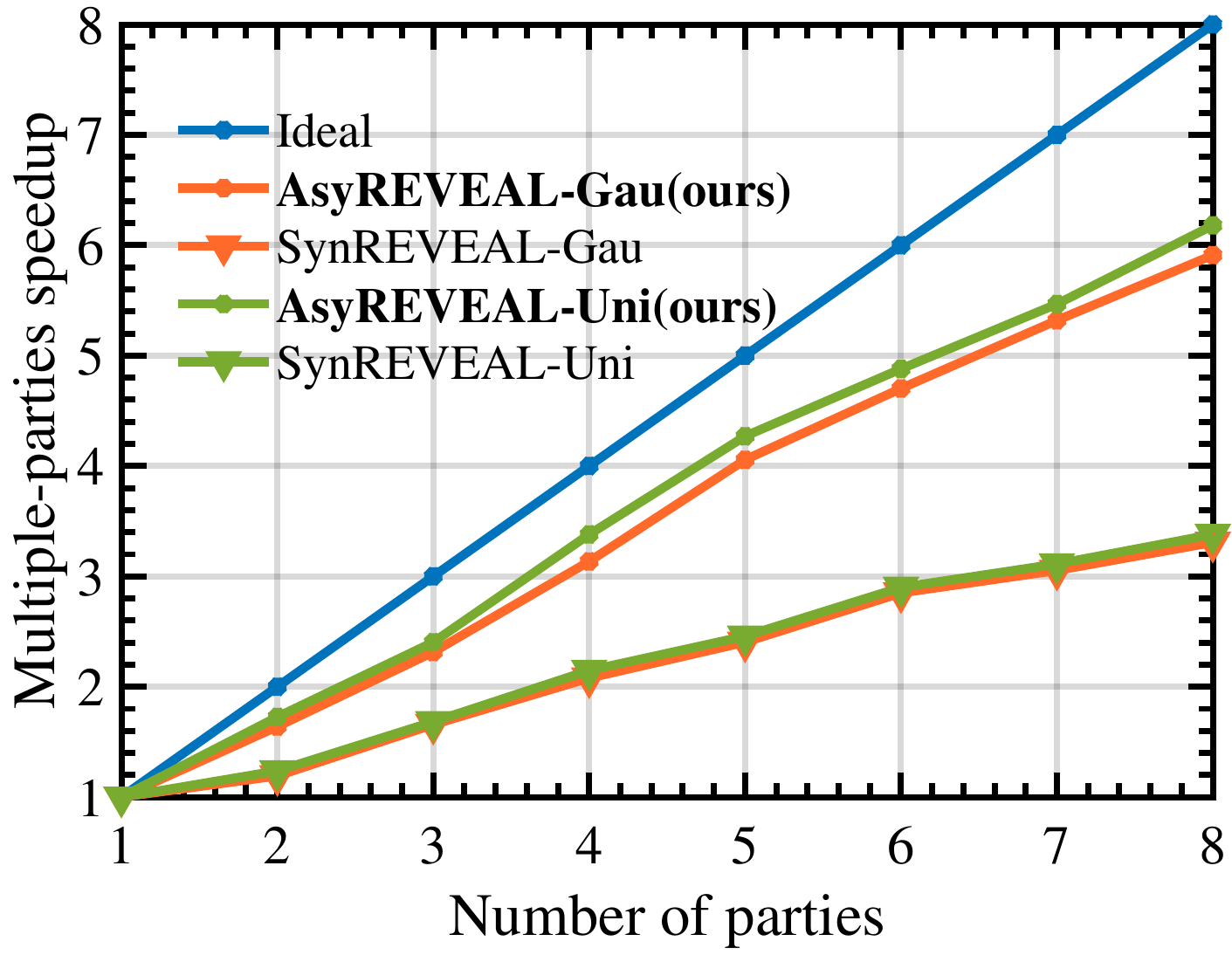}
	\end{subfigure}
	\caption{Linear speedup results for AsyREVEL and SynREVEL algorithms. Left: $D_5$ for federated LR problem. Right: $D_7$ for federated neural network problem.}
	\label{Exp-sca}
\end{figure}
\section{Conclusion}
In this paper, we revealed that ZOO is a desirable companion for VFL. Specifically, ZOO can 1) improve the model applicability of VFL framework. 2) prevent VFL framework from attacks under three levels of threat models, \ie, the curious, colluding, and malicious. 3) support inexpensive communication and efficient computation. We proposed a novel practical VFL framework with black-box models, which inherits the promising properties of  ZOO. Under this framework, we raised the novel AsyREVEL algorithms with two smoothing techniques. Moreover, we prove the privacy security of ZOO-VFL  under different attacks and theoretically drive the convergence rates of AsyREVEL algorithms under nonconvex condition.
\newpage
\appendix
\section*{Appendix}

\begin{lemma}
	\label{lemma: smooth_f_random}
	Suppose that Assumption~\ref{assum1} holds, then  we have
	
	1) $f_{\mu_m}$ is $L_m$-smooth and $f_{\mu}$ is $L$-smooth
	\begin{align}\label{eq: smooth_est_grad}
		\nabla_m f_{\mu_m}  =  \mathbb E_{u_m}  [  \hat \nabla_{m} f(w_0,\BW)  ]
		,
		\nabla f_{\mu}
		=  \mathbb E_{u }
		[ \hat \nabla f(w_0,\BW)]
	\end{align}
	where
	$\hat \nabla_{m} f(w_0,\BW)$ is given by Eq.~(6).
	
	2) For any $w_m \in \mathbb R^{d_{m}}$,
	\begin{align}
		& | f_{\mu_m}( w_m) - f(w_m) | \leq \frac{L_md_m {\mu_m}^2}{2} \label{eq: dist_f_smooth_true_random}  \\
		& \| \nabla_m f_{\mu_m} (w_0,\BW) - \nabla_{m} f(w_0,\BW) \|_2^2 \leq \frac{{\mu_m}^2 L_m^2 (d_{m}+3)^3}{4}, \label{eq: dist_smooth_true} \\
		&
		\mathbb E_{ u} \left [
		\|\hat \nabla_{m} f(w_0,\BW)\|_2^2
		\right ] \leq 2 (d_{m}+4) \| \nabla_{m} f(w_0,\BW) \|_2^2 + \frac{{\mu_m}^2  L_m^2 (d_{m}+6)^3}{2}. \label{eq: up_grad_fsmooth}
	\end{align}
	
	3) For any $w_m \in \mathbb R^{d_{m}}$,
	\begin{align}
		\mathbb E_{\mathbf u} \left [ \| \hat \nabla_{m} f(w_0,\BW) - \nabla_m f_{\mu_m} (w_0,\BW) \|_2^2 \right ]
		\leq 2 (2d_{m}+9) \| \nabla_{m} f(w_0,\BW) \|_2^2 + {{\mu_m}^2  L_m^2 (d_{m}+6)^3}.
		\label{eq: second_moment_grad_random}
	\end{align}
\end{lemma}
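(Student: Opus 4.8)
The plan is to establish the four inequalities by specializing the classical Gaussian-smoothing estimates of Nesterov--Spokoiny to the block-coordinate setting of problem~(\ref{P}). First I would define, for each $m$, the Gaussian-smoothed surrogate $f_{\mu_m}(w_0,\BW) := \mathbb{E}_{u_m}[f(w_0,\ldots,w_m+\mu_m u_m,\ldots)]$ with $u_m$ standard Gaussian in $\mathbb{R}^{d_m}$, and analogously $f_\mu$ for the full variable. Part~1) then follows from differentiating under the expectation: the identity $\nabla_m f_{\mu_m} = \mathbb{E}_{u_m}[\hat\nabla_m f]$ is obtained by the standard Gaussian Stein-type computation, using that $\mathbb{E}_{u_m}[f(w_m)u_m]=0$ so the subtraction of $f(w_m)$ in the two-point estimator~(\ref{eq:zerogradient}) does not change the mean; $L_m$-smoothness of $f_{\mu_m}$ is inherited from $L_m$-smoothness of $\nabla_m f$ in Assumption~\ref{assum1} because smoothing is an averaging operation and preserves the Lipschitz-gradient constant. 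The only wrinkle is bookkeeping the fact that we smooth one block at a time, so each statement is ``per-block'' and the global statement for $f_\mu$ is a direct sum over the coordinate blocks.

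Next, for Part~2) I would invoke the three standard lemmas for Gaussian smoothing applied to the $L_m$-smooth function $w_m \mapsto f(w_0,\BW)$ (with the other blocks frozen): (i) $|f_{\mu_m}-f|\le \tfrac{L_m d_m}{2}\mu_m^2$ is the basic bias bound from a second-order Taylor estimate integrated against the Gaussian; (ii) $\|\nabla_m f_{\mu_m}-\nabla_m f\|^2 \le \tfrac{\mu_m^2 L_m^2 (d_m+3)^3}{4}$ is the gradient-bias bound; and (iii) $\mathbb{E}_{u_m}\|\hat\nabla_m f\|^2 \le 2(d_m+4)\|\nabla_m f\|^2 + \tfrac{\mu_m^2 L_m^2 (d_m+6)^3}{2}$ is the second-moment bound on the estimator. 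Each of these is a known fact for smooth functions; the work is simply to note that freezing the other blocks leaves an $L_m$-smooth function of $w_m$ alone, so the one-dimensional-per-block machinery applies verbatim with dimension $d_m$.

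Finally, Part~3) is a short corollary of Part~2). Write $\hat\nabla_m f - \nabla_m f_{\mu_m} = (\hat\nabla_m f - \nabla_m f) + (\nabla_m f - \nabla_m f_{\mu_m})$, apply $\|a+b\|^2 \le 2\|a\|^2 + 2\|b\|^2$, take expectations, and bound the first term using $\mathbb{E}\|\hat\nabla_m f - \nabla_m f\|^2 = \mathbb{E}\|\hat\nabla_m f\|^2 - \|\nabla_m f\|^2 \le 2(d_m+4)\|\nabla_m f\|^2 - \|\nabla_m f\|^2 + \tfrac{\mu_m^2 L_m^2(d_m+6)^3}{2}$ via~(\ref{eq: up_grad_fsmooth}), and the second term using~(\ref{eq: dist_smooth_true}); collecting constants (and using $(d_m+3)^3 \le (d_m+6)^3$ to merge the $\mu_m^2$ terms) yields the stated $2(2d_m+9)\|\nabla_m f\|^2 + \mu_m^2 L_m^2 (d_m+6)^3$.

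The main obstacle, such as it is, is not analytical depth but care in the reduction: one must verify that the block-coordinate structure of $f_i$ in~(\ref{P}) — where $w_m$ enters only through the cascade $c_{i,m}=F_m(w_m;x_{i,m})$ feeding into $F_0$, plus the separable regularizer $\lambda g(w_m)$ — genuinely gives an $L_m$-smooth function of $w_m$ with the other blocks held fixed (this is exactly the content of the ``$\nabla_m f_i$ is $L_m$-Lipschitz'' clause of Assumption~\ref{assum1}), so that the Gaussian-smoothing lemmas of Nesterov--Spokoiny apply on $\mathbb{R}^{d_m}$ without modification. Once that identification is made, every inequality is a direct citation of the known smooth-function bounds with $d\rightsquigarrow d_m$, $L\rightsquigarrow L_m$, $\mu\rightsquigarrow\mu_m$.
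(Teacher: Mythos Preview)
Your overall strategy is exactly right, and it matches what the paper does: in the paper this lemma is stated without proof, as a block-coordinate specialization of the standard Gaussian-smoothing estimates (Nesterov--Spokoiny), so your plan to freeze all blocks but $w_m$, observe that Assumption~\ref{assum1} makes the resulting univariate-block function $L_m$-smooth on $\mathbb{R}^{d_m}$, and then quote the known bounds with $d\rightsquigarrow d_m$, $L\rightsquigarrow L_m$, $\mu\rightsquigarrow\mu_m$ is precisely the intended route. Parts~1) and~2) go through verbatim as you describe.

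There is, however, a slip in your derivation of Part~3). You write
\[
\mathbb{E}\|\hat\nabla_m f - \nabla_m f\|^2 \;=\; \mathbb{E}\|\hat\nabla_m f\|^2 - \|\nabla_m f\|^2,
\]
but this identity is the variance formula $\mathbb{E}\|X-\mathbb{E}X\|^2=\mathbb{E}\|X\|^2-\|\mathbb{E}X\|^2$, and by Part~1) the mean of $\hat\nabla_m f$ is $\nabla_m f_{\mu_m}$, \emph{not} $\nabla_m f$. So the equality as written is false in general. The fix is to skip your intermediate decomposition entirely and apply the variance identity with the correct center:
\[
\mathbb{E}_{u}\bigl\|\hat\nabla_m f - \nabla_m f_{\mu_m}\bigr\|^2
\;=\; \mathbb{E}_{u}\|\hat\nabla_m f\|^2 - \|\nabla_m f_{\mu_m}\|^2
\;\le\; \mathbb{E}_{u}\|\hat\nabla_m f\|^2,
\]
and then invoke~(\ref{eq: up_grad_fsmooth}) directly. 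This already gives $2(d_m+4)\|\nabla_m f\|^2 + \tfrac{1}{2}\mu_m^2 L_m^2(d_m+6)^3$, which is stronger than the stated bound in~(\ref{eq: second_moment_grad_random}), so Part~3) follows immediately. With this one-line correction your proposal is complete and aligned with the paper.
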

\begin{lemma}
	Under Assumptions~\ref{assum0} to \ref{assum3}, for $m=0,1,\cdots,q$ there is
	\begin{align}\label{eq:proof-0}
		\BE\|\whv_\mt^t-v_\mt^t\|^2 &=  \BE\|\whv_\mt^t - \bv_\mt^t + \bv_\mt^t -v_\mt^t\|^2
		\nonumber \\
		&\leq  2\BE\|\whv_\mt^t - \bv_\mt^t\| + 2\BE\|\bv_\mt^t -v_\mt^t\|^2
		\nonumber \\
		&\leq  \frac{{\mu_m}^2L_\mt^2(d_\mt+3)^3}{2} + 2L^2\|\BWW^t -\BW^t\|^2
	\end{align}
	and
	\begin{align}\label{eq:proof-1}
		\BE\|\whv_\mt^t\|^2 &=  \BE\|\whv_\mt^t - v_\mt^t + v_\mt^t\|^2
		\nonumber \\
		&\leq  2\BE\|\whv_\mt^t - v_\mt^t\| + 2\BE\|v_\mt^t\|^2
		\nonumber \\
		&\leq  {{\mu_m}^2L_\mt^2d_\mt^2} + 4L^2\|\BWW^t -\BW^t\|^2 + 2\BE\|v_\mt^t\|^2
		\nonumber \\
		&\leq  3{{\mu_m}^2L_\mt^2(d_\mt+3)^3} + 4L^2\|\BWW^t -\BW^t\|^2 + 4\sigma_\mt^2
	\end{align}
\end{lemma}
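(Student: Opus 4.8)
The plan is to bound the two quantities $\BE\|\whv_\mt^t - v_\mt^t\|^2$ and $\BE\|\whv_\mt^t\|^2$ by inserting the smoothed-gradient surrogate $\bv_\mt^t := \nabla_\mt f_{\mu_\mt}(w_0,\BW)$ as an intermediate term, applying the inequality $\|a+b\|^2 \le 2\|a\|^2 + 2\|b\|^2$, and then invoking the estimates already collected in Lemma~\ref{lemma: smooth_f_random}. For \eqref{eq:proof-0}, I would write $\whv_\mt^t - v_\mt^t = (\whv_\mt^t - \bv_\mt^t) + (\bv_\mt^t - v_\mt^t)$; the first term is controlled in expectation by \eqref{eq: dist_smooth_true} (an unbiasedness-type bound of order $\mu_\mt^2 L_\mt^2 (d_\mt+3)^3$), and the second term, the gap between the smoothed gradient at the \emph{stale} point and the true gradient at the current point, is handled by $L$-Lipschitz continuity of $\nabla f_i$ (Assumption~\ref{assum1}), giving $\|\bv_\mt^t - v_\mt^t\|^2 \le L^2\|\BWW^t - \BW^t\|^2$, where $\BWW^t$ denotes the delayed iterate. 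Combining with the factor-$2$ split yields the stated bound.

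For \eqref{eq:proof-1}, the approach is to telescope once more: $\whv_\mt^t = (\whv_\mt^t - v_\mt^t) + v_\mt^t$, so $\BE\|\whv_\mt^t\|^2 \le 2\BE\|\whv_\mt^t - v_\mt^t\|^2 + 2\BE\|v_\mt^t\|^2$. I would substitute the bound just derived in \eqref{eq:proof-0} for the first term, and then bound $\BE\|v_\mt^t\|^2$ by the bounded block-coordinate gradient assumption $\|\nabla_\mt f_i\|^2 \le \sigma_\mt^2$ (Assumption~\ref{assum1}). Absorbing the constants (replacing $d_\mt^2$ by $(d_\mt+3)^3$ and collecting the numerical factors, using $\tfrac12 \le 1$ and $2\cdot 2 = 4$ etc.) gives the final line with the $3\mu_\mt^2 L_\mt^2 (d_\mt+3)^3 + 4L^2\|\BWW^t-\BW^t\|^2 + 4\sigma_\mt^2$ form. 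The two smoothing variants (Gau vs.\ Uni) differ only in which exponent of $d_\mt$ appears; here the bound is stated in the Gaussian form, and the uniform case follows identically from the corresponding clauses of Lemma~\ref{lemma: smooth_f_random}.

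The only genuinely delicate point is the bookkeeping around the stale iterate: one must be careful that $\bv_\mt^t$ is evaluated at the \emph{same} delayed argument $\BWW^t$ as $\whv_\mt^t$, so that the difference $\whv_\mt^t - \bv_\mt^t$ is exactly the zeroth-order estimation error at a fixed point (for which Lemma~\ref{lemma: smooth_f_random} applies verbatim), while the entire burden of the asynchrony is pushed into the single term $\|\BWW^t - \BW^t\|^2$, to be bounded later using Assumption~\ref{assum3}. Everything else is the routine $\|a+b\|^2 \le 2\|a\|^2+2\|b\|^2$ expansion plus substitution of already-proven inequalities, so I expect no real obstacle beyond this alignment of arguments and the constant-chasing.
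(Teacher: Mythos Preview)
Your proposal is correct and follows essentially the same approach as the paper: the lemma's ``proof'' is nothing more than the inline chain of inequalities in its statement, obtained exactly as you describe --- insert the intermediate $\bv_\mt^t$, apply $\|a+b\|^2\le 2\|a\|^2+2\|b\|^2$, invoke \eqref{eq: dist_smooth_true} from Lemma~\ref{lemma: smooth_f_random} for the smoothing-bias piece and the $L$-Lipschitz gradient (Assumption~\ref{assum1}) for the staleness piece, then repeat with $v_\mt^t$ and the bounded block-coordinate gradient bound $\sigma_\mt^2$ for \eqref{eq:proof-1}. Your remark that the delicate point is aligning $\bv_\mt^t$ with the stale argument $\BWW^t$ so that the entire asynchrony burden lands in the single term $\|\BWW^t-\BW^t\|^2$ is exactly the organizing idea the paper relies on (and later unrolls via Assumption~\ref{assum3} in \eqref{eq:proof-4}).
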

\begin{proof}
	
	Under Assumption~\ref{assum1},  and taking expectation w.r.t. the sample index $i_t$, we have
	\begin{align}\label{eq:proof-2}
		&\BE f_{\mu_m}(w_0^{t+1},\BW^{t+1})
		\nonumber \\
		&= \BE \left( f_{\mu_m}(w_0^{t}-\eta_0\whv_0^t
		,\cdots, w_{m_t}^t-\eta_{m_t}\whv_{m_t}^t,\cdots) \right)
		\nonumber\\
		&\leq  \BE \left(  f_{\mu_m}(w_0^{t},\BW^{t}) -\eta_0 \left<V_0^t,\whv_0^t\right> - \eta_{m_t}\left<V_\mt^t,\whv_\mt^t\right>
		+ \frac{L\eta_0^2}{2}\|\whv_0^t\|^2
		+  \frac{L\eta_{m_t}^2}{2}\|\whv_{m_t}^t\|^2 \right)
		\nonumber\\
		&= \BE \left(  f_{\mu_m}(w_0^{t},\BW^{t}) -\eta_0 \left<V_0^t,\whv_0^t - v_0^t + v_0^t\right>
		- \eta_{m_t}\left<V_m^t,\whv_\mt^t -  v_\mt^t + v_\mt^t\right>
		+ \frac{L\eta_0^2}{2}\|\whv_0^t\|^2
		+  \frac{L\eta_{m_t}^2}{2}\|\whv_{m_t}^t\|^2 \right)
		\nonumber\\
		&\leq \BE \left(  f_{\mu_m}(w_0^{t},\BW^{t}) -\eta_0\|V_0^t\|^2 - \eta_0 \left<V_0^t,\whv_0^t - v_0^t \right>
		- \eta_{m_t} \|V_\mt^t\|^2 - \eta_{m_t} \left<V_\mt^t,\whv_\mt^t -  v_\mt^t\right>
		+ \frac{L\eta_0^2}{2}\|\whv_0^t\|^2
		+  \frac{L\eta_{m_t}^2}{2}\|\whv_{m_t}^t\|^2 \right)
		\nonumber\\
		&\leq \BE \left(  f_{\mu_m}(w_0^{t},\BW^{t}) -\frac{\eta_0}{2}\|V_0^t\|^2 + \frac{\eta_0}{2}\|\whv_0^t - v_0^t\|^2
		- \frac{\eta_{m_t}}{2} \|V_\mt^t\|^2 + \frac{\eta_{m_t}}{2}\|\whv_\mt^t -  v_\mt^t\|
		+ \frac{L\eta_0^2}{2}\|\whv_0^t\|^2
		+  \frac{L\eta_{m_t}^2}{2}\|\whv_{m_t}^t\|^2 \right)
		\nonumber\\
		&\leq \BE \left(  f_{\mu_m}(w_0^{t},\BW^{t}) -\frac{\eta_0}{2}\|V_0^t\|^2
		- \frac{\eta_{m_t}}{2} \|V_\mt^t\|^2
		+ (\eta_0 + \eta_\mt + 2L\eta_0^2 + 2L\eta_\mt^2) L^2\|\BWW^t -  \BW^t\|^2\right)
		\nonumber\\
		& + (\frac{\eta_\mt}{4} + \frac{3L\eta_\mt^2}{2}){{\mu_m}^2L_\mt^2(d_\mt+3)^3}
		+ (\frac{\eta_0}{4} + \frac{3L\eta_0^2}{2}){{\mu_m}^2L_0^2(d_0+3)^3} + 2L\eta_\mt^2\sigma_\mt^2 + 2L\eta_0^2\sigma_0^2
	\end{align}
	
	Taking expectation w.r.t. $m_t$, and using Assumption~\ref{assum2}, there is
	\begin{align}\label{eq:proof-3}
		&\BE f_{\mu_m}(w_0^{t+1},\BW^{t+1})\nonumber \\
		&\leq \BE  f_{\mu_m}(w_0^{t},\BW^{t}) -\frac{\eta_0}{2}\BE\|V_0^t\|^2
		-\sum_{m=1}^{q} p_m \frac{\eta_{m}}{2}\BE \|V_m^t\|^2
		+ \underbrace{(\eta_0 + 2L\eta_0^2 + \max_{m}(2L\eta_m^2+\eta_m)) L^2}_{\beta^t} \BE\|\BWW^t -  \BW^t\|^2
		\nonumber\\
		& +{\sum_{m=1}^{q} p_m(\frac{\eta_m}{4} + \frac{3L\eta_m^2}{2}){\mu_m}^2L_m^2}  (d_m+3)^3
		+ (\frac{\eta_0}{4} +\frac{3 L\eta_0^2}{2}){\mu_m}^2L_0^2(d_0+3)^3
		+ {\sum_{m=1}^{q} p_m2L\eta_m^2}\sigma_m^2 + 2{L\eta_0^2} \sigma_0^2
	\end{align}
	According to Assumption~\ref{assum3}, there is
	\begin{align}\label{eq:proof-4}
		\|\BWW^t -  \BW^t\|^2 = \|\sum_{i \in D(t)}\BW^{i+1}-\BW^{i}\|^2 \leq \tau\sum_{i=1}^{\tau}\|\BW^{t+1-i}-\BW^{t-i}\|^2
	\end{align}
	We than bound the term $ \|\BWW^t -  \BW^t\|^2$. First, for $\BE\|\BW^{t+1} -  \BW^t\|^2$
	\begin{align}\label{eq:proof-5}
		\BE \|\BW^{t+1} -  \BW^t\|^2  = \BE\eta_\mt^2\|\whv_\mt^2 \|
		\leq \sum_{m=1}^{q}p_m \eta_m^2(3{{\mu_m}^2L_m^2(d_m+3)^3} + 4\sigma_m^2) + \max_{m}\eta_m^2 4L^2\|\BWW^t -\BW^t\|^2
	\end{align}
	Define a Lyapunov function as
	\begin{align}\label{eq:proof-6}
		M^t = f_{\mu_m}(w_0^t,\BW^t) + \sum_{i=1}^{\tau}\theta_i\|\BW^{i+1}-\BW^{i}\|^2
	\end{align}
	Following Lemma~\ref{lemma: smooth_f_random} and Eq.~\ref{eq:proof-6}, there is
	\begin{align}\label{eq:proof-7}
		&\BE (M^{t+1} - M^t) \nonumber \\
		&= \BE\left( f_{\mu_m}(w_0^{t+1},\BW^{t+1}) + \sum_{i=1}^{\tau}\theta_i\|\BW^{t+1+1-i}-\BW^{t+1-i}\|^2
		- f_{\mu_m}(w_0^t,\BW^t) - \sum_{i=1}^{\tau}\theta_i\|\BW^{t+1-i}-\BW^{t-i}\|^2 \right)
		\nonumber \\
		& =  -\frac{\eta_0}{2}\BE\|V_0^t\|^2
		-\sum_{m=1}^{q} p_m \frac{\eta_{m}}{2}\BE \|V_m^t\|^2
		+ \beta^t\BE\|\BWW^t -  \BW^t\|^2
		+ {\sum_{m=1}^{q} p_m2L\eta_m^2}\sigma_m^2 + 2{L\eta_0^2} \sigma_0^2
		\nonumber \\
		& +{\sum_{m=1}^{q} p_m(\frac{\eta_m}{4} + \frac{3L\eta_m^2}{2}){\mu_m}^2L_m^2}  (d_m+3)^3
		+(\frac{\eta_0}{4} + \frac{3L\eta_0^2}{2}){\mu_m}^2L_0^2 (d_0+3)^3
		\nonumber \\
		& + \theta_1\BE \|\BW^{t+1}-\BW^t\|^2 + \sum_{i=1}^{\tau-1}(\theta_{i+1}-\theta_i)\BE \|\BW^{t+1-i}-\BW^{t-i}\|^2  - \theta_\tau\BE \|\BW^{t+1-\tau}-\BW^{t-\tau}\|^2
		\nonumber \\
		& \leq
		-\frac{\eta_0}{2}\BE\|V_0^t\|^2
		-\sum_{m=1}^{q} p_m \frac{\eta_{m}}{2}\BE \|V_m^t\|^2
		+ \beta^t\tau\sum_{i=1}^{\tau}\|\BW^{t+1-i}-\BW^{t-i}\|^2
		+ {\sum_{m=1}^{q} p_m2L\eta_m^2}\sigma_m^2 + 2{L\eta_0^2} \sigma_0^2
		\nonumber \\
		& +{\sum_{m=1}^{q} p_m(\frac{\eta_m}{4} + \frac{3L\eta_m^2}{2}){\mu_m}^2L_m^2}  (d_m+3)^3
		+ (\frac{\eta_0}{4} + \frac{3L\eta_0^2}{2}){\mu_m}^2L_0^2 (d_0+3)^3
		\nonumber \\
		&
		+ \sum_{i=1}^{\tau-1}(\theta_{i+1}-\theta_i)\BE \|\BW^{t+1-i}-\BW^{t-i}\|^2  - \theta_\tau\BE \|\BW^{t+1-\tau}-\BW^{t-\tau}\|^2
		\nonumber \\
		& + \theta_1(\sum_{m=1}^{q}p_m \eta_m^2(3{{\mu_m}^2L_m^2d_m^2} + 4\sigma_m^2) + \max_{m}\eta_m^2 4L^2\tau\sum_{i=1}^{\tau}\|\BW^{t+1-i}-\BW^{t-i}\|^2)
		\nonumber \\
		& \leq
		-\frac{\eta_0}{2}\BE\|V_0^t\|^2
		-\sum_{m=1}^{q} p_m \frac{\eta_{m}}{2}\BE \|V_m^t\|^2
		+ {\sum_{m=1}^{q} p_m\eta_m^2}(L+4\theta_1)\sigma_m^2
		+ 2{L\eta_0^2} \sigma_0^2
		\nonumber \\
		& + \sum_{m=1}^{q}p_m(\frac{\eta_m}{4}+\frac{3L\eta_m^2}{2}+ 3\theta_1\eta_m^2 ){\mu_m}^2L_m^2  (d_m+3)^3
		+(\frac{\eta_0}{4} + \frac{3L\eta_0^2}{2}){\mu_m}^2L_0^2 (d_0+3)^3
		\nonumber \\
		&
		+ \sum_{i=1}^{\tau-1}(\beta_t\tau + \tau \theta_1\max_{m}\eta_m^2 4L^2 + \theta_{i+1}-\theta_i)\BE \|\BW^{t+1-i}-\BW^{t-i}\|^2  + (\beta_t\tau +  \tau \theta_1\max_{m}\eta_m^2 4L^2 -\theta_\tau)\BE \|\BW^{t+1-\tau}-\BW^{t-\tau}\|^2
	\end{align}
	If we choose $\eta_0,\eta_m\leq \bar{\eta}\leq\frac{1}{4(L+2\theta_1)}$, then there is $\beta^t\leq\frac{3\bar{\eta}L^2}{2}$. Then for Eq.~\ref{eq:proof-7} there is
	\begin{align}\label{eq:proof-8}
		&\BE (M^{t+1} - M^t) \nonumber \\
		& \leq -\frac{1}{2}\min\{\eta_0,p_m\eta_m\}\BE\|\nabla f_{\mu_m}(w_0,\BW)\|^2
		+ {\sum_{m=1}^{q} p_m\eta_m^2}(L+4\theta_1)\sigma_m^2
		+ 2{L\eta_0^2} \sigma_0^2
		\nonumber \\
		&  + \sum_{m=1}^{q}p_m(\frac{\eta_m}{4}+\frac{3L\eta_m^2}{2}+ 3\theta_1\eta_m^2 ){\mu_m}^2L_m^2  (d_m+3)^3
		+(\frac{\eta_0}{4} + \frac{3L\eta_0^2}{2}){\mu_m}^2L_0^2 (d_0+3)^3
		\nonumber \\
		&
		-\sum_{i=1}^{\tau-1}(\theta_i - \theta_{i+1} - {\frac{3}{2}\bar{\eta}L^2}\tau - 4\tau\theta_1L^2\bar{\eta}^2) \BE \|\BW^{t+1-i}-\BW^{t-i}\|^2
		- (\theta_\tau - {\frac{3}{2}\bar{\eta}L^2}\tau - 4\tau\theta_1L^2\bar{\eta}^2)\BE \|\BW^{t+1-\tau}-\BW^{t-\tau}\|^2
	\end{align}
	Let $\theta_1 = \frac{3/2{\eta}\tau^2L^2}{1-4\tau^2{\eta}^2L^2}\leq\frac{1}{2}\tau L$ and $\eta_0=\eta_m=\eta\leq\frac{1}{4(\tau+1)L}$ and  choose $\theta_2,\cdots,\theta_\tau$ as
	\begin{align}\label{eq:proof-9}
		\theta_{i+1} = \theta_i - {\frac{3}{2}{\eta}L^2}\tau - 4\tau\theta_1L^2{\eta}^2,\quad \text{for}\ i=1, \cdots,\tau-1
	\end{align}
	Following form Eq.~xx and the definition of $\theta_1$, there is $\theta_{\tau} = \theta_1 - (\tau-1)\frac{3{\eta}L^2}{2}\tau - 4(\tau-1)\tau\theta_1L^2{\eta}^2\geq 0$. Then Eq.~\ref{eq:proof-8} reduces to
	\begin{align}\label{eq:proof-10}
		&\BE (M^{t+1} - M^t)
		\leq -\frac{1}{2}\min_{m}p_m\eta\BE\|\nabla f_{\mu_m}(w_0,\BW)\|^2 + 2{L\eta^2} \sigma_0^2 \nonumber \\
		&+ {\sum_{m=1}^{q} p_m\eta^2}(L+2\tau L)\sigma_m^2 +
		+ \sum_{m=1}^{q}p_m(\frac{\eta}{4}+\frac{3L\eta^2}{2}+ \frac{3}{2}\tau L\eta^2 ){\mu_m}^2L_m^2  (d_m+3)^3
		+(\frac{\eta}{4} +\frac{3 L\eta^2}{2}){\mu_m}^2L_0^2 (d_0+3)^3
	\end{align}
	Summing Eq.~\ref{eq:proof-10} over $t=0,\cdots,T-1$, there is
	\begin{align}\label{eq:proof-11}
		\frac{1}{T}\sum_{t=0}^{T-1}\BE\|\nabla f_{\mu_m}(w_0,\BW)\|^2
		&\leq \frac{f_{\mu_m}^0-f_{\mu_m}^*}{\frac{1}{2}\min_{m}p_mT\eta}
		+ \frac{{\sum_{m=1}^{q} p_m\eta}(L+2\tau L)\sigma_m^2 + {2L\eta} \sigma_0^2}{\frac{1}{2}\min_{m}p_m}
		\nonumber \\
		& + \frac{\sum_{m=1}^{q}p_m(\frac{1}{4}+\frac{3L\eta+}{2} \frac{3}{2}\tau L\eta ){\mu_m}^2L_m^2  (d_m+3)^3
			+(\frac{1}{4} + \frac{3L\eta}{2}){\mu_m}^2L_0^2 (d_0+3)^3}{\frac{1}{2}\min_{m}p_m}
	\end{align}
	According to Lemma~\ref{lemma: smooth_f_random}, there is
	\begin{align}\label{eq:proof-12}
		\BE\|\nabla_{m} f(w_0,\BW)\|^2 \leq 2\BE\|\nabla_m f_{\mu_m}(w_0,\BW)\|^2 + \frac{{\mu_m}^2L_m^2(d_m+3)^3}{2}.
	\end{align}
	Thus, there is
	\begin{align}\label{eq:proof-13}
		\BE\|\nabla f(w_0,\BW)\|^2 &\leq 2\sum_{m=0}^{q}\BE\|\nabla_m f_{\mu_m}(w_0,\BW)\|^2 + \sum_{m=0}^{q} \frac{{\mu_m}^2L_m^2(d_m+3)^3}{2}
		\nonumber \\
		& \leq 2\BE\|\nabla f_{\mu_m}(w_0,\BW)\|^2 + \sum_{m=0}^{q} \frac{{\mu_m}^2L_m^2(d_m+3)^3}{2}.
	\end{align}
	Similarly, according to Lemma~\ref{lemma: smooth_f_random}, there is
	\begin{align}\label{eq:proof-14}
		f(w_0^0,\BW^0) - f^* \leq f_{\mu_m}(w_0^0,\BW^0)-f_{\mu_m}^* + \sum_{m=0}^{q}\frac{{L_md_m{\mu_m}^2}}{2}
	\end{align}
	Applying Eqs. \ref{eq:proof-13} and \ref{eq:proof-14} to Eq.~\ref{eq:proof-11}, there is
	\begin{align}\label{eq:proof-15}
		\frac{1}{T}\sum_{t=0}^{T-1}\BE\|\nabla f(w_0,\BW)\|^2
		&\leq \frac{f^0-f^*}{\frac{1}{4}\min_{m}p_mT\eta}
		+ \frac{{\sum_{m=1}^{q} p_m\eta}(L+2\tau L)\sigma_m^2 + {2L\eta} \sigma_0^2}{\frac{1}{4}\min_{m}p_m}
		+ \sum_{m=0}^{q}\frac{{L_md_m{\mu_m}^2}}{2T}
		\nonumber \\
		& + \sum_{m=0}^{q} \frac{{\mu_m}^2L_m^2(d_m+3)^3}{2} + \frac{\sum_{m=0}^{q}p_m(\frac{1}{4}+\frac{3L\eta}{2}+ \frac{3}{2}\tau L\eta ){\mu_m}^2L_m^2 (d_m+3)^3
		}{\frac{1}{4}\min_{m}p_m}
	\end{align}
	Let $L_*=\max\{\{L_m\}_{m=0}^q,L\}$, $d_* = \max\{d_m+3\}_{m=0}^{q}$, $\sigma_*^2= \max_{m}\sigma_m^2$, $\frac{1}{p_*}=\min_mp_m$, then Eq.~xx reduces to
	\begin{align}\label{eq:proof-16}
		\frac{1}{T}\sum_{t=0}^{T-1}\BE\|\nabla f(w_0,\BW)\|^2
		&\leq \frac{4p_*(f^0-f^*)}{T\eta}
		+ {8p_*(L+\tau L){\eta}\sigma_*^2}
		+\frac{{(q+1)L_*d_*{\mu_m}^2}}{2T}
		+  \frac{(q+1){\mu_m}^2L_*^2d_*^3}{2}
		\nonumber \\
		& + {p_*(2+3L_*\eta+ \frac{3}{2}\tau L_*\eta ){\mu_m}^2L_*^2 d_*^3}
	\end{align}
	Choosing $\eta=\min\{\frac{1}{4(\tau+1)L},\frac{m_0}{\sqrt{T}}\}$ with constant $m_0>0$ and ${\mu_m}=\mathcal{O}(\frac{1}{\sqrt{T}})$ such as ${\mu_m} = \frac{1}{\sqrt{T}L_*d_*^{3/2}}$, there is
	\begin{align}\label{eq:proof-17}
		\frac{1}{T}\sum_{t=0}^{T-1}\BE\|\nabla f(w_0,\BW)\|^2
		&\leq \frac{4p_*(f^0-f^*)}{\sqrt{T}m_0}
		+ \frac{{8{p_*}m_0(L+\tau L)\sigma_*^2}}{\sqrt{T}}
		+\frac{{(q+1)}}{2T^2L_*d_*^2}
		+  \frac{(q+1)+3p_*}{2T}
	\end{align}
	Thus, if $\tau$ is a constant independent to $T$, then there
\end{proof}

\newpage
\onecolumn
\begin{lemma}
	\label{lemma: smooth_f_random}
	Suppose that Assumption~\ref{assum1} holds, then  we have
	
	1) $f_{\mu_m}(w_m)$ is $L_m$-smooth and $f_{\mu_m}(w_0,\BW)$ is $L$-smooth
	\begin{align}\label{eq: smooth_est_grad}
		\nabla_m f_{\mu_m} (w_0,\BW)
		=  \mathbb E_{u } \left [  \hat \nabla_{m} f(w_0,\BW) \right ]
		\\
		\nabla f_{\mu_m} (w_0,\BW)
		=  \mathbb E_{u }
		\left [
		\hat \nabla f(w_0,\BW)
		\right ]
	\end{align}
	where $\mathbf u$ is drawn from
	the uniform distribution over the unit Euclidean
	sphere, and
	$\hat \nabla_{w_m} f(w_0,\BW)$ is given by Eq.~(6).
	
	2) For any $w_m \in \mathbb R^{d_{m}}$,
	\begin{align}
		& | f_{\mu_m}( w_m) - f(w_m) | \leq \frac{L_md_m {\mu_m^2}}{2} \label{eq: dist_f_smooth_true_random}  \\
		& \| \nabla_m f_{\mu_m} (w_0,\BW) - \nabla_{m} f(w_0,\BW) \|_2^2 \leq \frac{{\mu_m}^2 L_m^2 d_{m}^2}{4}, \label{eq: dist_smooth_true}
	\end{align}
\end{lemma}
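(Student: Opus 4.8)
The plan is to identify $f_{\mu_m}$ with the uniform ball-average of $f$ in the $w_m$ coordinate, i.e.\ $f_{\mu_m}(w_0,\BW):=\mathbb{E}_{v}\bigl[f(w_0,\dots,w_m+\mu_m v,\dots,w_q)\bigr]$ where $v$ is drawn uniformly from the closed unit Euclidean ball of $\mathbb{R}^{d_m}$ (and likewise for the single-block restriction $f_{\mu_m}(w_m)$), and then to push everything through the divergence theorem and the first-order consequence of Assumption~\ref{assum1}. This is the standard randomized-smoothing argument, adapted to the block-coordinate setting of problem~(\ref{P}).

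For part~1, the smoothness claims follow because $f_{\mu_m}$ is an average of translates of $f$ — the translation acts on the $w_m$-component only, but that is still a rigid translation of the full argument — and averaging translates of a function whose gradient is $L$-Lipschitz (resp.\ whose block-$m$ gradient is $L_m$-Lipschitz) preserves that Lipschitz constant. For the expectation identity, differentiating the ball-average in $w_m$ and applying the divergence theorem turns the volume integral over the radius-$\mu_m$ ball into a surface integral over the radius-$\mu_m$ sphere; since in $\mathbb{R}^{d_m}$ the ratio of that sphere's surface area to that ball's volume is exactly $d_m/\mu_m$, one gets $\nabla_m f_{\mu_m}=\frac{d_m}{\mu_m}\mathbb{E}_{u}\bigl[f(w_m+\mu_m u)\,u\bigr]$ with $u$ uniform on the unit sphere. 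Comparing with the estimator $\hat\nabla_m f$ of Eq.~\eqref{eq:zerogradient}, the only extra piece is the baseline $-\frac{d_m}{\mu_m} f(w_m)\,u$, and $\mathbb{E}_u[u]=0$ by symmetry of the sphere, so $\mathbb{E}_u[\hat\nabla_m f]=\nabla_m f_{\mu_m}$; summing over the sample index $i$ gives the identity for $\nabla f_{\mu_m}$.

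For part~2, both estimates come from the quadratic bound $\bigl|f(y+h)-f(y)-\langle\nabla f(y),h\rangle\bigr|\le\frac{L}{2}\|h\|^2$ implied by Assumption~\ref{assum1} (with $L_m$ and the block restriction in the $w_m$ direction). The function-value gap is $f_{\mu_m}(w_m)-f(w_m)=\mathbb{E}_v\bigl[f(w_m+\mu_m v)-f(w_m)\bigr]$; the linear term vanishes since $\mathbb{E}_v[v]=0$, so the modulus is at most $\frac{L_m}{2}\mu_m^2\,\mathbb{E}_v\|v\|^2\le\frac{L_m}{2}\mu_m^2\le\frac{L_m d_m}{2}\mu_m^2$ (the last step is crude slack, inserted only to match the Gaussian-smoothing bound). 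The gradient gap uses $\mathbb{E}_u[uu^{\top}]=\frac{1}{d_m}I$ for the uniform sphere (hence $\frac{d_m}{\mu_m}\mathbb{E}_u[\langle\nabla_m f,\mu_m u\rangle u]=\nabla_m f$), which rewrites $\nabla_m f_{\mu_m}-\nabla_m f=\frac{d_m}{\mu_m}\mathbb{E}_u\bigl[\bigl(f(w_m+\mu_m u)-f(w_m)-\langle\nabla_m f,\mu_m u\rangle\bigr)u\bigr]$; bounding the scalar factor by $\frac{L_m}{2}\mu_m^2$ and using $\|u\|=1$ yields $\|\nabla_m f_{\mu_m}-\nabla_m f\|\le\frac{d_m L_m}{2}\mu_m$, which is the claimed squared bound. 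The routine content is the descent-lemma algebra; the step that needs genuine care is the divergence-theorem computation of the gradient identity — getting the $d_m/\mu_m$ constant right and ensuring the convolution is taken in $\mathbb{R}^{d_m}$ with all other blocks (including $w_0$ and the cascade through $F_0$) frozen. One should also note that the perturbation acts on the regularizer $\lambda g(w_m)$ as well, as in Eq.~\eqref{eq-1}, which is harmless because the constants $L_m$ in Assumption~\ref{assum1} already incorporate $g$.
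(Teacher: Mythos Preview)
Your proposal is correct and follows the standard randomized-smoothing argument (ball-average definition of $f_{\mu_m}$, divergence theorem for the sphere-integral gradient identity, and the descent-lemma remainder for the two quantitative bounds). The paper itself does not supply a proof of this lemma: it is stated in the appendix as a known technical fact from the zeroth-order optimization literature and then invoked in the proof of the main convergence theorem, so there is no paper-side argument to compare against. Your observation that the function-value bound \eqref{eq: dist_f_smooth_true_random} carries an unnecessary factor of $d_m$ in the uniform-ball case (since $\mathbb{E}_v\|v\|^2\le 1$) is accurate; the stated form appears to be chosen simply to mirror the Gaussian-smoothing bound in the companion lemma.
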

\begin{lemma}
	Under Assumptions~\ref{assum0} to \ref{assum3}, for $m=0,1,\cdots,q$ there is
	\begin{align}\label{eq:proof-0}
		\BE\|\whv_\mt^t-v_\mt^t\|^2 &=  \BE\|\whv_\mt^t - \bv_\mt^t + \bv_\mt^t -v_\mt^t\|^2
		\nonumber \\
		&\leq  2\BE\|\whv_\mt^t - \bv_\mt^t\| + 2\BE\|\bv_\mt^t -v_\mt^t\|^2
		\nonumber \\
		&\leq  \frac{{\mu_m}^2L_\mt^2d_\mt^2}{2} + 2L^2\|\BWW^t -\BW^t\|^2
	\end{align}
	and
	\begin{align}\label{eq:proof-1}
		\BE\|\whv_\mt^t\|^2 &=  \BE\|\whv_\mt^t - v_\mt^t + v_\mt^t\|^2
		\nonumber \\
		&\leq  2\BE\|\whv_\mt^t - v_\mt^t\| + 2\BE\|v_\mt^t\|^2
		\nonumber \\
		&\leq  {{\mu_m}^2L_\mt^2d_\mt^2} + 4L^2\|\BWW^t -\BW^t\|^2 + 2\BE\|v_\mt^t\|^2
		\nonumber \\
		&\leq  3{{\mu_m}^2L_\mt^2d_\mt^2} + 4L^2\|\BWW^t -\BW^t\|^2 + 4\sigma_\mt^2
	\end{align}
\end{lemma}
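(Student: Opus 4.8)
The plan is to prove both inequalities by the ``insert an intermediate quantity, then split with $\|a+b\|^2\le 2\|a\|^2+2\|b\|^2$'' device that the statement's first two lines already display, so as to separate the zeroth-order estimation error (bias and variance of the two-point estimator) from the asynchrony error (the gradient mismatch between the stale state $\BWW^t$ and the current state $\BW^t$). For (\ref{eq:proof-0}) I would pick the intermediate quantity $\bv_\mt^t$ to be the appropriate $m_t$-block gradient evaluated at the \emph{stale} iterate $(w_0^t,\BWW^t)$ — e.g.\ the conditional mean $\BE_{i_t,u}[\whv_\mt^t]=\nabla_{m_t} f_{\mu_{m_t}}(w_0^t,\BWW^t)$ of the zeroth-order estimate — so that $\whv_\mt^t-\bv_\mt^t$ carries only zeroth-order randomness/bias and $\bv_\mt^t-v_\mt^t$ carries only the delay.

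For the zeroth-order term $\BE\|\whv_\mt^t-\bv_\mt^t\|^2$ I would call on Lemma~\ref{lemma: smooth_f_random}: it supplies $\BE_u[\hat\nabla_m f]=\nabla_m f_{\mu_m}$, the deviation bound $\|\nabla_m f_{\mu_m}-\nabla_m f\|^2\le \frac{\mu_m^2 L_m^2 d_m^2}{4}$ in the uniform case (and $\frac{\mu_m^2 L_m^2 (d_m+3)^3}{4}$ in the Gaussian case), and the second-moment inequality for the estimator, which together with the bounded block-coordinate gradient $\|\nabla_m f_i\|^2\le\sigma_m^2$ (Assumption~\ref{assum1}) control the contribution of the random index $i_t$ and the random direction; this yields the leading $\frac{1}{2}\mu_m^2 L_\mt^2 d_\mt^2$ term (resp.\ with $(d_\mt+3)^3$). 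For the delay term $\BE\|\bv_\mt^t-v_\mt^t\|^2$ I would use that $\nabla f_i$, hence $\nabla f$ and, by part~1 of the same lemma, $\nabla f_{\mu_m}$, are $L$-Lipschitz (Assumption~\ref{assum1}), and that Assumption~\ref{assum3} writes the stale state $\BWW^t$ as a bounded-delay iterate of $\BW$; hence $\BE\|\bv_\mt^t-v_\mt^t\|^2\le L^2\|\BWW^t-\BW^t\|^2$, which I would leave unexpanded since $\|\BWW^t-\BW^t\|^2$ is unwound into past updates only later in the main argument. Summing the two pieces with the Young's-inequality factor $2$ gives (\ref{eq:proof-0}).

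The second bound then follows from the further split $\whv_\mt^t=(\whv_\mt^t-v_\mt^t)+v_\mt^t$ and $\|a+b\|^2\le 2\|a\|^2+2\|b\|^2$: the first term is handled by (\ref{eq:proof-0}), and $\BE\|v_\mt^t\|^2$ is bounded using $\|\nabla_m f_i\|^2\le\sigma_m^2$ (so $\|\nabla_m f\|^2\le\sigma_m^2$ by convexity of the squared norm, plus one more use of the smoothing deviation bound if $v_\mt^t$ is taken as the smoothed gradient), which produces the $3\mu_m^2 L_\mt^2 d_\mt^2+4L^2\|\BWW^t-\BW^t\|^2+4\sigma_\mt^2$ of (\ref{eq:proof-1}). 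I expect the main obstacle to be the joint bookkeeping of the three heterogeneous error sources — the $O(\mu_m^2)$ smoothing bias, the variance of the two-point estimator taken over \emph{both} $i_t$ and the random direction (where the second-moment estimates of Lemma~\ref{lemma: smooth_f_random} are indispensable), and the delay term $\|\BWW^t-\BW^t\|^2$ — and in applying the stale-for-current substitution of Assumption~\ref{assum3} consistently, so that no cross-terms between ``staleness'' and ``estimation noise'' survive; the Gaussian-versus-uniform distinction enters only at this stage, through which constants ($(d_\mt+3)^3$ versus $d_\mt^2$) are imported from the smoothing lemma.
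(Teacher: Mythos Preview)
Your plan is correct and matches the paper's approach: the paper does not give a separate proof for this lemma—the chain of (in)equalities displayed in the statement \emph{is} the argument—and your identification of $\bv_\mt^t$ as the block gradient at the stale iterate, together with the use of Lemma~\ref{lemma: smooth_f_random} for the smoothing/bias term and the $L$-Lipschitzness of $\nabla f$ for the delay term $\|\BWW^t-\BW^t\|^2$, reproduces exactly those steps. The only thing to be careful about is that in the uniform-sphere version the paper's Lemma~\ref{lemma: smooth_f_random} supplies only the bias bound (\ref{eq: dist_smooth_true}), not a separate second-moment inequality, so the $\tfrac{1}{4}\mu_m^2L_m^2d_m^2$ constant must come from that bias bound rather than from a variance estimate.
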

\begin{proof}
	Under Assumption~\ref{assum1},  and taking expectation w.r.t. the sample index $i_t$, we have
	\begin{align}\label{eq:proof-2}
		&\BE f_{\mu_m}(w_0^{t+1},\BW^{t+1})
		\nonumber \\
		&= \BE \left( f_{\mu_m}(w_0^{t}-\eta_0\whv_0^t
		,\cdots, w_{m_t}^t-\eta_{m_t}\whv_{m_t}^t,\cdots) \right)
		\nonumber\\
		&\leq  \BE \left(  f_{\mu_m}(w_0^{t},\BW^{t}) -\eta_0 \left<V_0^t,\whv_0^t\right> - \eta_{m_t}\left<V_\mt^t,\whv_\mt^t\right>
		+ \frac{L\eta_0^2}{2}\|\whv_0^t\|^2
		+  \frac{L\eta_{m_t}^2}{2}\|\whv_{m_t}^t\|^2 \right)
		\nonumber\\
		&= \BE \left(  f_{\mu_m}(w_0^{t},\BW^{t}) -\eta_0 \left<V_0^t,\whv_0^t - v_0^t + v_0^t\right>
		- \eta_{m_t}\left<V_m^t,\whv_\mt^t -  v_\mt^t + v_\mt^t\right>
		+ \frac{L\eta_0^2}{2}\|\whv_0^t\|^2
		+  \frac{L\eta_{m_t}^2}{2}\|\whv_{m_t}^t\|^2 \right)
		\nonumber\\
		&\leq \BE \left(  f_{\mu_m}(w_0^{t},\BW^{t}) -\eta_0\|V_0^t\|^2 - \eta_0 \left<V_0^t,\whv_0^t - v_0^t \right>
		- \eta_{m_t} \|V_\mt^t\|^2 - \eta_{m_t} \left<V_\mt^t,\whv_\mt^t -  v_\mt^t\right>
		+ \frac{L\eta_0^2}{2}\|\whv_0^t\|^2
		+  \frac{L\eta_{m_t}^2}{2}\|\whv_{m_t}^t\|^2 \right)
		\nonumber\\
		&\leq \BE \left(  f_{\mu_m}(w_0^{t},\BW^{t}) -\frac{\eta_0}{2}\|V_0^t\|^2 + \frac{\eta_0}{2}\|\whv_0^t - v_0^t\|^2
		- \frac{\eta_{m_t}}{2} \|V_\mt^t\|^2 + \frac{\eta_{m_t}}{2}\|\whv_\mt^t -  v_\mt^t\|
		+ \frac{L\eta_0^2}{2}\|\whv_0^t\|^2
		+  \frac{L\eta_{m_t}^2}{2}\|\whv_{m_t}^t\|^2 \right)
		\nonumber\\
		&\leq \BE \left(  f_{\mu_m}(w_0^{t},\BW^{t}) -\frac{\eta_0}{2}\|V_0^t\|^2
		- \frac{\eta_{m_t}}{2} \|V_\mt^t\|^2
		+ (\eta_0 + \eta_\mt + 2L\eta_0^2 + 2L\eta_\mt^2) L^2\|\BWW^t -  \BW^t\|^2\right)
		\nonumber\\
		& + (\frac{\eta_\mt}{4} + \frac{3L\eta_\mt^2}{2}){{\mu_m}^2L_\mt^2d_\mt^2}
		+ (\frac{\eta_0}{4} + \frac{3L\eta_0^2}{2}){{\mu_m}^2L_0^2d_0^2} + 2L\eta_\mt^2\sigma_\mt^2 + 2L\eta_0^2\sigma_0^2
	\end{align}
	Taking expectation w.r.t. $m_t$, and using Assumption~\ref{assum2}, there is
	\begin{align}\label{eq:proof-3}
		&\BE f_{\mu_m}(w_0^{t+1},\BW^{t+1})\nonumber \\
		&\leq \BE  f_{\mu_m}(w_0^{t},\BW^{t}) -\frac{\eta_0}{2}\BE\|V_0^t\|^2
		-\sum_{m=1}^{q} p_m \frac{\eta_{m}}{2}\BE \|V_m^t\|^2
		+ \underbrace{(\eta_0 + 2L\eta_0^2 + \max_{m}(2L\eta_m^2+\eta_m)) L^2}_{\beta^t} \BE\|\BWW^t -  \BW^t\|^2
		\nonumber\\
		& +{\sum_{m=1}^{q} p_m(\frac{\eta_m}{4} + \frac{3L\eta_m^2}{2}){\mu_m}^2L_m^2}  d_\mt^2
		+ (\frac{\eta_0}{4} +\frac{3 L\eta_0^2}{2}){\mu_m}^2L_0^2d_0^2
		+ {\sum_{m=1}^{q} p_m2L\eta_m^2}\sigma_m^2 + 2{L\eta_0^2} \sigma_0^2
	\end{align}
	According to Assumption~\ref{assum3}, there is
	\begin{align}\label{eq:proof-4}
		\|\BWW^t -  \BW^t\|^2 = \|\sum_{i \in D(t)}\BW^{i+1}-\BW^{i}\|^2 \leq \tau\sum_{i=1}^{\tau}\|\BW^{t+1-i}-\BW^{t-i}\|^2
	\end{align}
	We than bound the term $ \|\BWW^t -  \BW^t\|^2$. First, for $\BE\|\BW^{t+1} -  \BW^t\|^2$
	\begin{align}\label{eq:proof-5}
		\BE \|\BW^{t+1} -  \BW^t\|^2  = \BE\eta_\mt^2\|\whv_\mt^2 \|
		\leq \sum_{m=1}^{q}p_m \eta_m^2(3{{\mu_m}^2L_m^2d_\mt^2} + 4\sigma_m^2) + \max_{m}\eta_m^2 4L^2\|\BWW^t -\BW^t\|^2
	\end{align}
	Define a Lyapunov function as
	\begin{align}\label{eq:proof-6}
		M^t = f_{\mu_m}(w_0^t,\BW^t) + \sum_{i=1}^{\tau}\theta_i\|\BW^{i+1}-\BW^{i}\|^2
	\end{align}
	Following Lemma~\ref{lemma: smooth_f_random} and Eq.~\ref{eq:proof-6}, there is
	\begin{align}\label{eq:proof-7}
		&\BE (M^{t+1} - M^t) \nonumber \\
		&= \BE\left( f_{\mu_m}(w_0^{t+1},\BW^{t+1}) + \sum_{i=1}^{\tau}\theta_i\|\BW^{t+1+1-i}-\BW^{t+1-i}\|^2
		- f_{\mu_m}(w_0^t,\BW^t) - \sum_{i=1}^{\tau}\theta_i\|\BW^{t+1-i}-\BW^{t-i}\|^2 \right)
		\nonumber \\
		& =  -\frac{\eta_0}{2}\BE\|V_0^t\|^2
		-\sum_{m=1}^{q} p_m \frac{\eta_{m}}{2}\BE \|V_m^t\|^2
		+ \beta^t\BE\|\BWW^t -  \BW^t\|^2
		+ {\sum_{m=1}^{q} p_m2L\eta_m^2}\sigma_m^2 + 2{L\eta_0^2} \sigma_0^2
		\nonumber \\
		& +{\sum_{m=1}^{q} p_m(\frac{\eta_m}{4} + \frac{3L\eta_m^2}{2}){\mu_m}^2L_m^2}  d_{m}^2
		+(\frac{\eta_0}{4} + \frac{3L\eta_0^2}{2}){\mu_m}^2L_0^2 d_{0}^2
		\nonumber \\
		& + \theta_1\BE \|\BW^{t+1}-\BW^t\|^2 + \sum_{i=1}^{\tau-1}(\theta_{i+1}-\theta_i)\BE \|\BW^{t+1-i}-\BW^{t-i}\|^2  - \theta_\tau\BE \|\BW^{t+1-\tau}-\BW^{t-\tau}\|^2
		\nonumber \\
		& \leq
		-\frac{\eta_0}{2}\BE\|V_0^t\|^2
		-\sum_{m=1}^{q} p_m \frac{\eta_{m}}{2}\BE \|V_m^t\|^2
		+ \beta^t\tau\sum_{i=1}^{\tau}\|\BW^{t+1-i}-\BW^{t-i}\|^2
		+ {\sum_{m=1}^{q} p_m2L\eta_m^2}\sigma_m^2 + 2{L\eta_0^2} \sigma_0^2
		\nonumber \\
		& +{\sum_{m=1}^{q} p_m(\frac{\eta_m}{4} + \frac{3L\eta_m^2}{2}){\mu_m}^2L_m^2}  d_{m}^2
		+ (\frac{\eta_0}{4} + \frac{3L\eta_0^2}{2}){\mu_m}^2L_0^2 d_{0}^2
		\nonumber \\
		&
		+ \sum_{i=1}^{\tau-1}(\theta_{i+1}-\theta_i)\BE \|\BW^{t+1-i}-\BW^{t-i}\|^2  - \theta_\tau\BE \|\BW^{t+1-\tau}-\BW^{t-\tau}\|^2
		\nonumber \\
		& + \theta_1(\sum_{m=1}^{q}p_m \eta_m^2(3{{\mu_m}^2L_m^2d_m^2} + 4\sigma_m^2) + \max_{m}\eta_m^2 4L^2\tau\sum_{i=1}^{\tau}\|\BW^{t+1-i}-\BW^{t-i}\|^2)
		\nonumber \\
		& \leq
		-\frac{\eta_0}{2}\BE\|V_0^t\|^2
		-\sum_{m=1}^{q} p_m \frac{\eta_{m}}{2}\BE \|V_m^t\|^2
		+ {\sum_{m=1}^{q} p_m\eta_m^2}(L+4\theta_1)\sigma_m^2
		+ 2{L\eta_0^2} \sigma_0^2
		\nonumber \\
		& + \sum_{m=1}^{q}p_m(\frac{\eta_m}{4}+\frac{3L\eta_m^2}{2}+ 3\theta_1\eta_m^2 ){\mu_m}^2L_m^2  d_{m}^2
		+(\frac{\eta_0}{4} + \frac{3L\eta_0^2}{2}){\mu_m}^2L_0^2 d_{0}^2
		\nonumber \\
		&
		+ \sum_{i=1}^{\tau-1}(\beta_t\tau + \tau \theta_1\max_{m}\eta_m^2 4L^2 + \theta_{i+1}-\theta_i)\BE \|\BW^{t+1-i}-\BW^{t-i}\|^2  + (\beta_t\tau +  \tau \theta_1\max_{m}\eta_m^2 4L^2 -\theta_\tau)\BE \|\BW^{t+1-\tau}-\BW^{t-\tau}\|^2
	\end{align}
	If we choose $\eta_0,\eta_m\leq \bar{\eta}\leq\frac{1}{4(L+2\theta_1)}$, then there is $\beta^t\leq\frac{3\bar{\eta}L^2}{2}$. Then for Eq.~\ref{eq:proof-7} there is
	\begin{align}\label{eq:proof-8}
		&\BE (M^{t+1} - M^t) \nonumber \\
		& \leq -\frac{1}{2}\min\{\eta_0,p_m\eta_m\}\BE\|\nabla f_{\mu_m}(w_0,\BW)\|^2
		+ {\sum_{m=1}^{q} p_m\eta_m^2}(L+4\theta_1)\sigma_m^2
		+ 2{L\eta_0^2} \sigma_0^2
		\nonumber \\
		&  + \sum_{m=1}^{q}p_m(\frac{\eta_m}{4}+\frac{3L\eta_m^2}{2}+ 3\theta_1\eta_m^2 ){\mu_m}^2L_m^2  d_{m}^2
		+(\frac{\eta_0}{4} + \frac{3L\eta_0^2}{2}){\mu_m}^2L_0^2 d_{0}^2
		\nonumber \\
		&
		-\sum_{i=1}^{\tau-1}(\theta_i - \theta_{i+1} - {\frac{3}{2}\bar{\eta}L^2}\tau - 4\tau\theta_1L^2\bar{\eta}^2) \BE \|\BW^{t+1-i}-\BW^{t-i}\|^2
		- (\theta_\tau - {\frac{3}{2}\bar{\eta}L^2}\tau - 4\tau\theta_1L^2\bar{\eta}^2)\BE \|\BW^{t+1-\tau}-\BW^{t-\tau}\|^2
	\end{align}
	Let $\theta_1 = \frac{3/2{\eta}\tau^2L^2}{1-4\tau^2{\eta}^2L^2}\leq\frac{1}{2}\tau L$ and $\eta_0=\eta_m=\eta\leq\frac{1}{4(\tau+1)L}$ and  choose $\theta_2,\cdots,\theta_\tau$ as
	\begin{align}\label{eq:proof-9}
		\theta_{i+1} = \theta_i - {\frac{3}{2}{\eta}L^2}\tau - 4\tau\theta_1L^2{\eta}^2,\quad \text{for}\ i=1, \cdots,\tau-1
	\end{align}
	Following form Eq.~\ref{eq:proof-6} and the definition of $\theta_1$, there is $\theta_{\tau} = \theta_1 - (\tau-1)\frac{3{\eta}L^2}{2}\tau - 4(\tau-1)\tau\theta_1L^2{\eta}^2\geq 0$. Then Eq.~\ref{eq:proof-8} reduces to
	\begin{align}\label{eq:proof-10}
		&\BE (M^{t+1} - M^t)  \leq -\frac{1}{2}\min_{m}p_m\eta\BE\|\nabla f_{\mu_m}(w_0,\BW)\|^2 + 2{L\eta^2} \sigma_0^2
		\nonumber \\
		&+ {\sum_{m=1}^{q} p_m\eta^2}(L+2\tau L)\sigma_m^2
		+ \sum_{m=1}^{q}p_m(\frac{\eta}{4}+\frac{3L\eta^2}{2}+ \frac{3}{2}\tau L\eta^2 ){\mu_m}^2L_m^2  d_{m}^2
		+(\frac{\eta}{4} +\frac{3 L\eta^2}{2}){\mu_m}^2L_0^2 d_{0}^2
	\end{align}
	Summing Eq.~ over $t=0,\cdots,T-1$, there is
	\begin{align}\label{eq:proof-11}
		\frac{1}{T}\sum_{t=0}^{T-1}\BE\|\nabla f_{\mu_m}(w_0,\BW)\|^2
		&\leq \frac{f_{\mu_m}^0-f_{\mu_m}^*}{\frac{1}{2}\min_{m}p_mT\eta}
		+ \frac{{\sum_{m=1}^{q} p_m\eta}(L+2\tau L)\sigma_m^2 + {2L\eta} \sigma_0^2}{\frac{1}{2}\min_{m}p_m}
		\nonumber \\
		& + \frac{\sum_{m=1}^{q}p_m(\frac{1}{4}+\frac{3L\eta+}{2} \frac{3}{2}\tau L\eta ){\mu_m}^2L_m^2  d_{m}^2
			+(\frac{1}{4} + \frac{3L\eta}{2}){\mu_m}^2L_0^2 d_{0}^2}{\frac{1}{2}\min_{m}p_m}
	\end{align}
	According to Lemma~\ref{lemma: smooth_f_random}, there is
	\begin{align}\label{eq:proof-12}
		\BE\|\nabla_{m} f(w_0,\BW)\|^2 \leq 2\BE\|\nabla_m f_{\mu_m}(w_0,\BW)\|^2 + \frac{{\mu_m}^2L_m^2d_{m}^2}{2}.
	\end{align}
	Thus, there is
	\begin{align}\label{eq:proof-13}
		\BE\|\nabla f(w_0,\BW)\|^2 &\leq 2\sum_{m=0}^{q}\BE\|\nabla_m f_{\mu_m}(w_0,\BW)\|^2 + \sum_{m=0}^{q} \frac{{\mu_m}^2L_m^2d_{m}^2}{2}
		\nonumber \\
		& \leq 2\BE\|\nabla f_{\mu_m}(w_0,\BW)\|^2 + \sum_{m=0}^{q} \frac{{\mu_m}^2L_m^2d_{m}^2}{2}.
	\end{align}
	Similarly, according to Lemma~\ref{lemma: smooth_f_random}, there is
	\begin{align}\label{eq:proof-14}
		f(w_0^0,\BW^0) - f^* \leq f_{\mu_m}(w_0^0,\BW^0)-f_{\mu_m}^* + \sum_{m=0}^{q}\frac{{L_m{\mu_m}^2}}{2}
	\end{align}
	Applying Eqs. \ref{eq:proof-13} and \ref{eq:proof-14} to Eq.~\ref{eq:proof-11}, there is
	\begin{align}\label{eq:proof-15}
		\frac{1}{T}\sum_{t=0}^{T-1}\BE\|\nabla f(w_0,\BW)\|^2
		&\leq \frac{f^0-f^*}{\frac{1}{4}\min_{m}p_mT\eta}
		+ \frac{{\sum_{m=1}^{q} p_m\eta}(L+2\tau L)\sigma_m^2 + {2L\eta} \sigma_0^2}{\frac{1}{4}\min_{m}p_m}
		+ \sum_{m=0}^{q}\frac{{L_md_m{\mu_m}^2}}{2T}
		+ \sum_{m=0}^{q} \frac{{\mu_m}^2L_m^2d_{m}^2}{2}
		\nonumber \\
		& + \frac{\sum_{m=0}^{q}p_m(\frac{1}{4}+\frac{3L\eta}{2}+ \frac{3}{2}\tau L\eta ){\mu_m}^2L_m^2 d_{m}^2
		}{\frac{1}{4}\min_{m}p_m}
	\end{align}
	Let $L_*=\max\{\{L_m\}_{m=0}^q,L\}$, $d_* = \max\{d_m\}_{m=0}^{q}$, $\sigma_*^2= \max_{m}\sigma_m^2$, $\frac{1}{p_*}=\min_mp_m$, then Eq.~xx reduces to
	\begin{align}\label{eq:proof-16}
		\frac{1}{T}\sum_{t=0}^{T-1}\BE\|\nabla f(w_0,\BW)\|^2
		&\leq \frac{4p_*(f^0-f^*)}{T\eta}
		+ {8p_*(L+\tau L){\eta}\sigma_*^2}
		+\frac{{(q+1)L_*{\mu_m}^2}}{2T}
		+  \frac{(q+1){\mu_m}^2L_*^2d_*^2}{2}
		\nonumber \\
		& + {p_*(2+3L_*\eta+ \frac{3}{2}\tau L_*\eta ){\mu_m}^2L_*^2 d_*^2}
	\end{align}
	Choosing $\eta=\min\{\frac{1}{4(\tau+1)L},\frac{m_0}{\sqrt{T}}\}$ with constant $m_0>0$ and ${\mu_m}=\mathcal{O}(\frac{1}{\sqrt{T}})$ such as ${\mu_m} = \frac{1}{\sqrt{T}L_*d_*}$, there is
	\begin{align}\label{eq:proof-17}
		\frac{1}{T}\sum_{t=0}^{T-1}\BE\|\nabla f(w_0,\BW)\|^2
		&\leq \frac{4p_*(f^0-f^*)}{\sqrt{T}m_0}
		+ \frac{{8{p_*}m_0(L+\tau L)\sigma_*^2}}{\sqrt{T}}
		+\frac{{(q+1)}}{2T^2L_*d_*^2}
		+  \frac{(q+1)+3p_*}{2T}
	\end{align}
	Thus, if $\tau$ is a constant independent to $T$, we can drive the corresponding result.
\end{proof}

%
\newpage
\bibliographystyle{ACM-Reference-Format}
{\bibliography{ref}}

\end{document}